\theoremstyle{plain}
\newtheorem{theo}{Theorem}
\newtheorem{lemm}[theo]{Lemma}
\title{Simultaneous Safe Screening of Features and Samples \\ in Doubly Sparse Modeling}
\date{\today}
\author{
Atsushi Shibagaki \\
Nagoya Institute of Technology \\
\texttt{shibagaki.a.mllab.nit@gmail.com} \\
\and
Masayuki Karasuyama \\
Nagoya Institute of Technology \\
\texttt{karasuyama@nitech.ac.jp} \\
\and
Kohei Hatano\\
Kyushu University \\
\texttt{hatano@inf.kyushu-u.ac.jp} \\
\and
Ichiro Takeuchi\thanks{Corresponding author} \\
Nagoya Institute of Technology \\
\texttt{takeuchi.ichiro@nitech.ac.jp} \\
}
\begin{document}

\maketitle

\begin{abstract}
  The problem of learning a sparse model
 is conceptually interpreted
 as the process of identifying \emph{active} features/samples
 and
 then
 optimizing the model
 over them.
 %
 %
 Recently introduced
 \emph{safe screening}
 allows us to identify a part of non-active features/samples.
 So far,
 safe screening has been individually studied
 either
 for feature screening
 or
 for sample screening.
 In this paper,
 we introduce a new approach for
 safely screening features and samples \emph{simultaneously}
 by
 alternatively
 iterating feature and sample screening steps.
 A significant advantage of
 considering them
 simultaneously
 rather than individually
 is that
 they have a \emph{synergy} effect
 in the sense that
 the results of the previous safe feature screening can be exploited
 for improving the next safe sample screening performances,
 and vice-versa.
 We first theoretically investigate the synergy effect,
 and then illustrate the practical advantage through intensive numerical experiments
 for problems
 with large numbers of features and samples.
 %

 \vspace{.1in}
 \begin{center}
 {\bf Keywords}
 \end{center}
 Sparse learning,
Safe screening,
Safe keeping,
Support vector machines, 
Convex optimization, 
Duality gap

 \vspace{.1in}
\end{abstract}

\clearpage

\section{Introduction}
\label{sec:introduction}
In many areas of science and industry,
large-scale datasets
with many features and samples
are collected and analyzed
for data-driven scientific discovery and decision making.
One promising approach to
handle large numbers of features and samples
analyze these large-scale datasets
is introducing \emph{sparsity} constraints in statistical models
\cite{hastie2015statistical}.
The most common approach for inducing \emph{feature sparsity},
e.g.,
in a linear least-square model fitting,
is using sparsity-inducing penalties such as $L_1$-norm of the coefficients
\cite{tibshirani1996regression}.
A feature sparse model
depends only on a subset of features
(called \emph{active} features),
and the rest of the features
(called \emph{non-active} features)
are irrelevant.
On the other hand,
the most popular machine learning algorithm
that induces
\emph{sample sparsity} would be
the support vector machine (SVM)
\cite{boser1992training}.
In the SVM,
the large margin principle
enhances sample sparsity
in the sense that
it depends only on a subset of samples
(called \emph{support vectors (SVs)} or \emph{active} samples)
and does not depend on the rest of the samples
(called \emph{non-SVs} or \emph{non-active} samples).

The problem of training a sparse model
can be conceptually divided into two tasks.
The first task is to identify
active features or samples,
and
the second task is to optimize the model
with respect to them.
If the first task is perfectly completed,
then the second task would be relatively easy
because
one only needs to solve a small-scale optimization problem
that only depends on a subset of features or samples.
In general,
however,
it is impossible to completely identify the set of active features or samples
before actually training the model.
Thus,
some of existing sparse learning solvers employ
\emph{working set} approaches.
Roughly speaking,
a working set is the set of features or samples
that are predicted to be active.
Since the prediction is not perfect,
there may be false positives
(non-active features/samples in the working set)
and
false negatives
(active features/samples not in the working set).
Thus,
one needs to repeat
a working set prediction
and
optimization over the working set
until the optimality condition is satisfied.
For example,
LIBSVM~\cite{chang2011libsvm},
a well-known SVM solver,
employs working set approaches.
It repeats
predicting a working set (the set of samples that are predicted to be SVs),
and
optimizing the model by only using the samples in the working set.

The problem of learning a sparse model
is conceptually interpreted
as the process of identifying active features or samples
and
then
optimizing the model
over them.
Numerical optimization of
those sparse learning methods is often
computationally expensive.
A major difficulty stems from a
combinatorial property of
identifying active features or samples for which
\emph{working set} approaches have been studied to predict a set of
features or samples to be active.
Recently,
a new approach
called
\emph{safe screening}
has been studied by several authors.
%
Safe screening
enables to identify a subset of non-active features or samples
before or during the model training process.
A nice thing about
safe screening
is that
it is guaranteed to have no false negatives,
i.e.,
safe screening never identify active features or samples as non-active.
It means that,
if we train the model
by only using the remaining set of features or samples after safe screening,
the solution is guaranteed to be optimal.
The basic technical idea behind safe screening is
to bound the solution of the problem
within a
 region,
and show that some features or samples cannot be active
wherever the optimal solution is located within the
 region.

After the seminal work by
\cite{ghaoui2012safe},
\emph{safe feature screening}
(safely screening a part of non-active features in sparse feature models such as LASSO)
has been intensively studied in the literature
\cite{xiang2011learning,wang2013lasso,bonnefoy2014dynamic,liu2014safe,wang2014safe,xiang2014screening,fercoq2015mind,ndiaye2015gap}.
Safe feature screening
exploits
the fact that
the sparseness of a feature is characterized by a property of the dual solution,
i.e.,
if the dual solution satisfies a certain condition in the dual space,
the feature is guaranteed to be non-active.
%
%
Safe feature screening is
beneficial
especially when the number of features is large.
There are also several studies on
\emph{safe sample screening}
(safely screening a part of non-active samples in sparse sample models such as the SVM)
\cite{ogawa2013safe,wang2013scaling,Zimmert2015}.
The basic idea behind safe sample screening is that
the sparseness of a sample is characterized by a property of the primal solution.
If
the primal solution
satisfies
a certain condition in the primal space,
the sample is guaranteed to be non-active.
Safe sample screening is
useful
when the number of samples is large.

In this paper,
we consider problems
where
the numbers of features and samples are both large.
In these problems,
we consider
a class of learning algorithms
that induce
both of feature sparsity and sample sparsity,
which we call
\emph{doubly sparse modeling}.
Our main contribution
in this paper is
to develop a safe screening method
that can identify both of non-active features and non-active samples
simultaneously
in doubly sparse modeling.
Specifically,
we propose a novel method for simultaneously constructing two
 regions,
one in the dual space and the other in the primal space.
The former is used for safe feature screening,
while
the latter is used for safe sample screening.

A significant advantage of considering safe feature screening and safe sample screening
simultaneously
rather than individually
is that
they have a \emph{synergy} effect.
Specifically,
we show that,
after we know that a part of features are non-active
based on safe feature screening,
we can potentially improve the performance of safe sample screening.
Our basic idea behind this property is that,
by fixing a part of the primal variables,
the
region in the primal space,
which is used for safe sample screening,
can be made smaller
(we can also show the converse similarly).
%
%
\begin{figure*}[t]
  \begin{center}
   \includegraphics[clip,width=12.5cm]{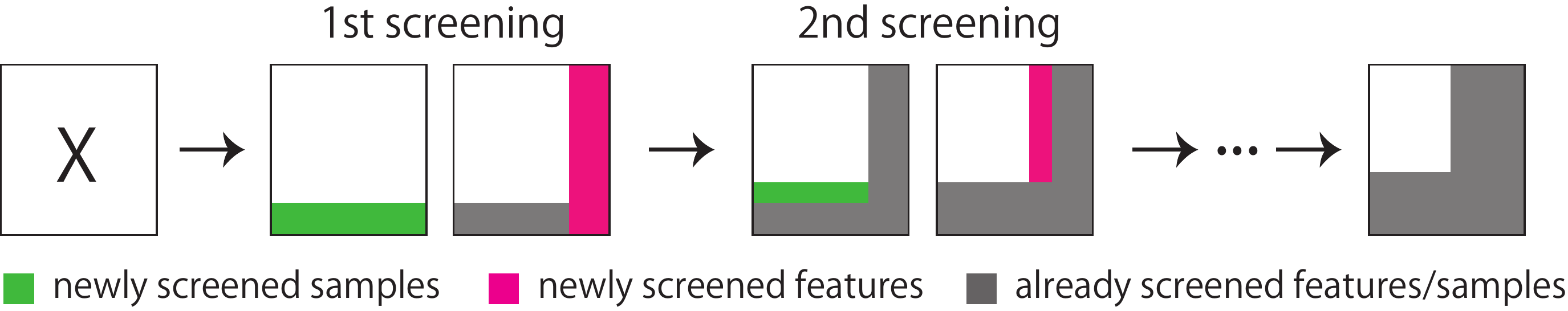}
   \caption{
   Schematic illustration of the proposed approach.
   By iterating safe feature screening and safe sample screening,
   irrelevant features and samples are safely removed out from the training set.
   A significant advantage of considering them simultaneously
   rather than individually is that
   they have a \emph{synergy} effect
   in the sense that
   the screening performances
   (the number of features/samples that can be safely removed out)
   are gradually improved
   by exploiting the results in earlier steps.
   }
   \label{fig:ills}
  \end{center}
\end{figure*}

These findings suggest that,
safe screening performances of features and samples can be both improved
by alternatively repeating them.
\figurename~\ref{fig:ills} is a schematic illustration of the simultaneous safe screening approach.

Another interesting finding we first introduce in this paper is that,
by simultaneously considering
 regions
both in the dual and the primal spaces,
we can also identify features and samples that are guaranteed to be active.
We call this technique as \emph{safe keeping}.
While
safe screening assures no false negative findings,
safe keeping guarantees no false positive findings of active features/samples.
By combining these two techniques,
we can better identify active features/samples.
A practical advantage of safe keeping is that
we do not have to consider the safe screening rules anymore
for features and samples
which are identified as active
by safe keeping.
%
This is helpful for reducing the computational costs of safe screening rule evaluations
especially
in the context of
dynamic safe screening \cite{bonnefoy2014dynamic}.

\subsection{Notation and outline}
\label{subsec:notation-outline}
We use the following notations in the rest of the paper.
For any natural number $n$,
we define
$[n] := \{1, \ldots, n\}$.
For an
$n \times d$ matrix $M$,
its
$i$-th row
and
$j$-th column
are denoted as
$M_{i:}$
and
$M_{:j}$,
respectively,
for
$i \in [n]$
and
$j \in [d]$.
The
$L_1$ norm,
the
$L_2$ norm
and
the
$L_\infty$ norm
of a vector
$v \in \RR^m$
are
respectively
written as
$\|v\|_1 := \sum_{k \in [m]} |v_k|$,
$\|v\|_2 := \sqrt{\sum_{k \in [m]} |v_k|^2}$
and
$\|v\|_\infty := \max_{k \in [m]} |v_k|$.
For a scalar $z$,
we define
$[z]_+ := \max\{0, z\}$.
We write the subdifferential operator as
$\partial$,
and
remind that
the subdifferential of $L_1$ norm is given as
$\partial \|v\|_1 = \{g \mid \|g\|_\infty \le 1, g^\top v = \|v\|_1 \}$.
For a function $f$,
we denote its domain as
${\rm dom}f$.

Here is the outline of the paper.
\S\ref{sec:preliminaries}
introduces
the problem formulation we mainly consider in this paper
and
basic concepts of safe screening.
\S\ref{sec:simultaneous}
describes
our main contribution
where
we show that
simultaneously screening features and samples has synergetic advantages.
\S\ref{sec:safe-keeping}
also presents another contribution
where
we introduce a new approach
for predicting active set without false positive errors.
\S\ref{sec:LP-based}
discusses another problem
that induces spasities both in features and samples.
\S\ref{sec:experiments}
covers
numerical experiments
for demonstrating the advantage of the simultaneous safe screening.
\S\ref{sec:conclusions}
concludes the paper.

\section{Preliminaries}
\label{sec:preliminaries}
In this section,
we first describe the problem formulation
in \S\ref{subsec:problem-formulation}.
Then,
we briefly summarize the basic concepts of
safe feature screening
and
safe sample screening
in
\S\ref{subsec:safe-feature-screening}
and
\S\ref{subsec:safe-sample-screening},
respectively.
\subsection{Problem formulation}
\label{subsec:problem-formulation}
Consider classification and regression problems
with
the number of samples $n$
and
the number of features $d$.
The training set is written as
$\{(x_i, y_i)\}_{i \in [n]}$
where
$x_i \in \RR^d$,
and
$y_i \in \{-1, 1\}$
for classification
and
$y_i \in \RR$
for regression.
The $n \times d$ input data matrix (design matrix)
is denoted as
$X := [x_1, \ldots, x_n]^\top$.

We consider a linear classification and regression function
in the form of
$f(x) = x^\top w$,
and study the problem of estimating the parameter
$w \in \RR^d$
by solving a class of regularized empirical risk minimization problems:
\begin{align}
 \label{eq:primal_prob}
 \min_{w \in \RR^d}P_\lambda(w)
:=
 \lambda \psi(w)
 +
 \frac{1}{n}
 \sum_{i \in [n]}
 \ell_i(x_i^\top w),
\end{align}
where
$\psi$
is a penalty function,
$\ell_i$
is a loss function
for the $i$-th sample\footnote{
Here,
we use individual loss function
$\ell_i$
for
$i \in [n]$
because it implicitly depends on
$y_i$
 (see, e.g., \eq{eq:smoothed_hinge} and \eq{eq:soft_insensitive}).
},
and
$\lambda > 0$
is a trade-off parameter
for controlling the balance between the penalty and the loss.

In this paper,
we study doubly sparse modeling,
i.e.,
a pair of penalty and loss function
that induces sparsities both in features and samples.
As specific working examples,
we consider
$L_1$-penalized smoothed hinge SV classification
and
$L_1$-penalized smoothed $\veps$-insensitive SV regression.
The use of these smoothed loss functions
are known to produce almost same solutions
as the original hinge or $\veps$-insensitive loss functions
\cite{shalev2015accelerated}.
We will discuss other doubly sparse modeling problem
in
\S\ref{sec:LP-based}.

Remembering that the original SVMs
are trained with $L_2$ penalty,
by combining an additional $L_1$-penalty,
our penalty function
$\psi$
is written as \emph{elastic net penalty}:
\begin{align}
 \label{eq:regularization-term}
 \psi(w) := \|w\|_1 + \frac{\beta}{2} \|w\|_2^2,
\end{align}
where $\beta > 0$ is a balancing parameter which we omit hereafter
by substituting $\beta = 1$ for notational simplicity.
%
%
The smoothed hinge loss
and
the smoothed $\veps$-insensitive loss
are respectively written as
\begin{align}
\label{eq:smoothed_hinge}
\!\!\ell_i(x_i^\top w) \!\! &:= \!\! \begin{cases}
0 & ( y_i x_i^\top w > 1 ), \\
1 - y_i x_i^\top w - \frac{\gamma}{2}  & (y_i x_i^\top w < 1-\gamma), \\
\frac{1}{2\gamma} (1 - y_i x_i^\top w)^2 & ({\rm otherwise}),
\end{cases}\\
  \label{eq:soft_insensitive}
  \!\!\ell_{i}(x_i^\top w) \!\!&:=\!\! \begin{cases}
    \!0 &\!\!\!\! ( | x_i^\top w - y_i | \!\! < \!\! \veps ), \\
    \!| x_i^\top \! w \! - \! y_i |\!\! - \! \veps \! - \! \frac{\gamma}{2}
    &\!\!\!\!\! (| x_i^\top w \! - \! y_i | \!\! > \!\! \veps \!\! + \!\! \gamma), \\
    \!\frac{1}{2\gamma} ( | x_i^\top w - y_i |\!\! - \veps)^2 &\!\!\!\!\! ({\rm otherwise}),
  \end{cases}
\end{align}
where
$\gamma > 0$
is a tuning parameter.
There are profiles the smoothed hinge loss and
the smoothed $\veps$-insensitive loss as shown
in \figurename~\ref{fig:loss_func}.
\begin{figure}[t]
  \begin{center}
   \includegraphics[clip,width=10cm]{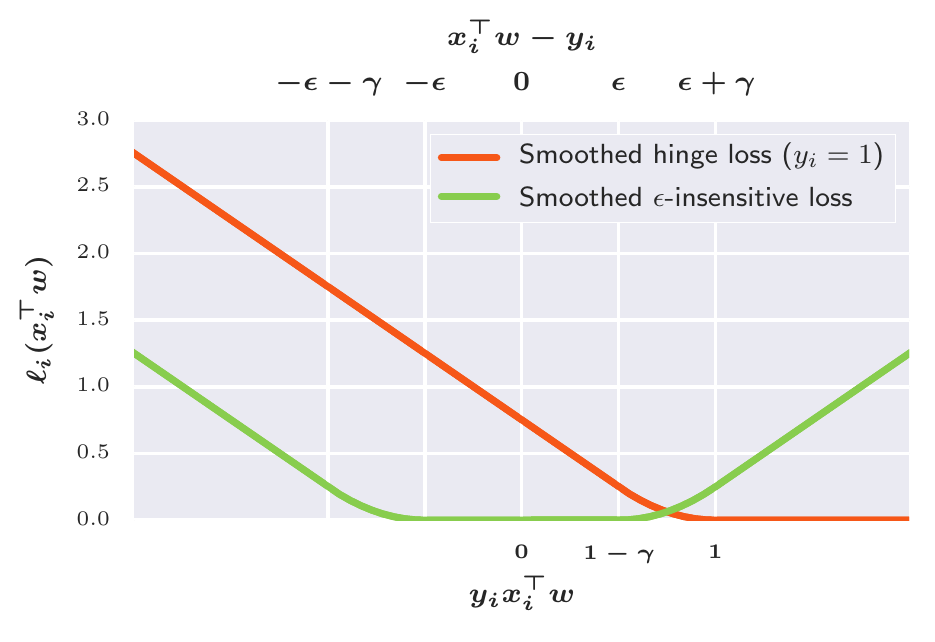}
   \caption{Illustrations of the smoothed hinge loss (orange)
           and
          the smoothed $\veps$-insensitive loss (green), where
          $\gamma = 0.5$ and $\veps = 0.5$.
   }
   \label{fig:loss_func}
  \end{center}
\end{figure}

Using Fenchel's duality theorem (see, e.g., Corollary 31.2.1 in \cite{rockafellar1970convex}),
the dual problem of
\eq{eq:primal_prob}
is written as
\begin{align}
  \label{eq:dual_prob}
 \!\!\!\!\!
 \max_{\alpha} D_\lambda(\alpha)  :=
 -\lambda \psi^* \!\!\left( \frac{1}{\lambda n}
   X^\top \alpha  \right) \!\!
- \frac{1}{n} \! \sum_{i \in [n]} \! \ell_{i}^* (-\alpha_i),
\end{align}
where
$\psi^*$
and
$\ell^*$
is convex conjugate function of
$\psi$
and
$\ell$,
respectively.
The convex conjugate function of the penalty term in
\eq{eq:regularization-term}
is given as
(see \cite{shalev2015accelerated} )
\begin{align*}
  \psi^*(v) &= \frac{1}{2} \sum_{j=1}^d
   ( [| v_j | - 1]_+  )^2.
\end{align*}
The convex conjugate functions of
the smoothed hinge loss
and
the smoothed $\veps$-insensitive loss
are
respectively written as
\begin{align}
  \label{eq:conj_smoothed_hinge}
  \ell^*_{i}(\alpha_i) &= \begin{cases}
    \frac{\gamma}{2} \alpha_i^2 + y_i \alpha_i & (y_i \alpha_i \in [-1, 0]), \\
    \infty & ({\rm otherwise}),
  \end{cases}
\end{align}
and
\begin{align}
  \label{eq:conj_soft_insensitive}
  \ell^*_{i}(\alpha_i) &= \begin{cases}
    \frac{\gamma}{2} \alpha_i^2 + y_i \alpha_i + \veps | \alpha_i |  & (\alpha_i \in [-1, 1]), \\
    \infty & ({\rm otherwise}).
  \end{cases}
\end{align}
We call the problems in
\eq{eq:primal_prob}
and
\eq{eq:dual_prob}
as
\emph{primal problem}
and
\emph{dual problem},
respectively,
and denote the primal optimal solution as
$w^* \in \RR^d$
and
the dual optimal solution
as
$\alpha^* \in \RR^n$.
%
\subsection{Safe feature screening}
\label{subsec:safe-feature-screening}
The goal of safe feature screening is to identify
a part of non-active features
$\{j \in [d] \mid w_j^* = 0\}$
before or during the optimization process.
Safe feature screening
is built on the following KKT optimality condition (see Theorem 31.3 in \cite{rockafellar1970convex})
\begin{align}
  \label{eq:kkt1}
  \frac{1}{\lambda n} X^\top \alpha^*  &\in \partial \psi(w^*).
\end{align}
In the case of our specific regularization term
\eq{eq:regularization-term},
the optimality condition
\eq{eq:kkt1}
is written as
\begin{align}
  \label{eq:kkt1_elastic_net}
 \frac{1}{\lambda n} X_{:j}^\top \alpha^* \in \begin{cases}
    \cfrac{w^*_j}{|w^*_j|} + w^*_j & (w^*_j \neq 0), \\
    [-1,1] & (w^*_j = 0).
  \end{cases}
\end{align}
The optimality condition
\eq{eq:kkt1_elastic_net}
indicates that
\begin{align*}
 |X_{:j}^\top \alpha^*| \le \lambda n
 ~\Rightarrow~
 w^*_j = 0.
\end{align*}
The basic idea behind safe feature screening is to construct a region
$\Theta_{\alpha^*} \subset \RR^n$
in the dual space
so that
$\alpha^* \in \Theta_{\alpha^*}$,
and then compute an upper bound
$UB(|X_{:j}^\top \alpha^*|) := \max_{\alpha \in \Theta_{\alpha^*}} |X_{:j}^\top \alpha|$.
Using this upper bound,
we can construct a safe feature screening rule in the form of
\begin{align*}
  UB(|X_{:j}^\top \alpha^*|) \le \lambda n
 ~\Rightarrow~
 w_j^* = 0.
\end{align*}

After the seminal work \cite{ghaoui2012safe},
many different approaches for constructing a region
$\Theta_{\alpha^*}$
have been developed (see \S\ref{sec:introduction}).
%
Among those,
we use an approach in \cite{ndiaye2015gap}.
Noting the fact that the convex conjugate function
$\ell^*$
is $\gamma$-strongly convex,
and henceforth the dual objective function
$D_\lambda(\alpha)$
is $\gamma/n$-strongly concave,
we can define a region
\begin{align}
 \label{eq:dual_gap_sphere}
 \Theta_{\alpha^*} := \{
 \alpha \mid \|\hat{\alpha} - \alpha\|_2 \le \sqrt{2 n G_\lambda(\hat{w}, \hat{\alpha})/\gamma} \},
\end{align}
where
$G_\lambda(\hat{w}, \hat{\alpha}) := P_\lambda(\hat{w}) - D_\lambda(\hat{w})$
is the duality gap defined
by an arbitrary pair of primal feasible solution
$\hat{w} \in {\rm dom}P_\lambda$
and dual feasible solution
$\hat{\alpha} \in {\rm dom}D_\lambda$.
Since the region
$\Theta_{\alpha^*}$
is a sphere,
we can explicitly write the upper bound as
\begin{align}
 \label{eq:UB-for-feature-screening}
 \!\! UB(|X_{:j}^\top \alpha^*|) \!=\!
 |X_{:j}^\top \hat{\alpha}| \! + \!
 \|X_{:j}\|_2 \sqrt{2 n G_\lambda(\hat{w}, \hat{\alpha})/\gamma}.
\end{align}
%
\subsection{Safe sample screening}
\label{subsec:safe-sample-screening}
The goal of safe sample screening is to identify
a part of non-active samples
$\{i \in [n] \mid \alpha_i^* = 0, \pm 1\}$
before or during the optimization process.
Here,
we slightly abuse the word ``non-active''
in the sense that
we call a sample to be non-active
not only when the corresponding
$\alpha^*_i$
is $0$,
but also
when it is
$\pm 1$.
Although
the $i$-th sample can be removed out
only when
$\alpha^*_i = 0$,
we have similar computational advantages
when we can guarantee that
$\alpha^*_i = \pm 1$
because
the size of the optimization problem can be reduced.

Safe sample screening is also built on the KKT optimality condition
\begin{align}
  \label{eq:kkt2}
  x_i^\top w^* &\in \partial \ell_i^*(- \alpha^*_i ).
\end{align}
In the case of smoothed hinge loss,
the KKT condition
\eq{eq:kkt2}
is written
when $y_i = 1$
as
\begin{align}
  \label{eq:kkt2_smoothed_hinge_posi}
  x_i^\top w^* \in \begin{cases}
  [1, \infty) & (\alpha^*_i = 0) \\
  (-\infty, 1 - \gamma] & (\alpha^*_i = 1 ) \\
  - \gamma \alpha_i^* + 1 & (\alpha^*_i \in (0, 1)).
  \end{cases}
\end{align}
%
We construct a region
$\Theta_{w^*} \subset \RR^d$
in the primal space
so that
$w^* \in \Theta_{w^*}$,
and then
compute a lower bound
$LB(x_i^\top w^*) := \min_{w \in \Theta_{w^*}} x_i^\top w$
and an upper bound
$UB(x_i^\top w^*) := \max_{w \in \Theta_{w^*}} x_i^\top w$.
The optimality condition
\eq{eq:kkt2_smoothed_hinge_posi}
suggests that
\begin{align*}
 &
 LB(x_i^\top w^*) \ge 1
 ~\Rightarrow~
 \alpha_i^* = 0,
 \\
 &
 UB(x_i^\top w^*) \le 1 - \gamma
 ~\Rightarrow~
 \alpha_i^* = 1.
\end{align*}
Similarly,
for $y_i = -1$,
the optimality condition is written as
\begin{align}
  \label{eq:kkt2_smoothed_hinge_nega}
  x_i^\top w^* \in \begin{cases}
    (-\infty, -1] & (\alpha^*_i = 0) \\
    [\gamma -1, \infty) & (\alpha^*_i = -1) \\
    -\gamma \alpha^*_i - 1 & (\alpha^*_i \in (-1, 0)),
  \end{cases}
\end{align}
suggesting that
\begin{align*}
 &
 UB (y_i x_i^\top w^*) \le -1
 ~\Rightarrow~
 \alpha_i^* = 0,
 \\
 &
 LB(y_i x_i^\top w^*) \ge \gamma - 1
 ~\Rightarrow~
 \alpha_i^* = -1.
\end{align*}
In the case of
smoothed $\veps$-insensitive loss,
the optimality condition
\eq{eq:kkt2}
is written as
\begin{align}
  \label{eq:kkt2_soft_insensitive}
 x_i^\top w^* \in \begin{cases}
    [y_i - \veps, y_i + \veps ] & (\alpha^*_i = 0), \\
    [\gamma + y_i + \veps, \infty ) & (\alpha^*_i = -1), \\
    (-\infty ,- \gamma + y_i - \veps] & (\alpha^*_i = 1), \\
    -\gamma \alpha^*_i + y_i + \veps & (\alpha^*_i \in (-1, 0)), \\
    -\gamma \alpha^*_i + y_i - \veps & (\alpha^*_i \in (0, 1)).
\end{cases}
\end{align}
It indicates that
\begin{align*}
 &
 LB(x_i^\top w^*) \ge y_i - \veps
 ~\text{ and }~
 UB(x_i^\top w^*) \le y_i + \veps
 ~\Rightarrow~
 \alpha_i^* = 0,
 \\
 &
 LB(x_i^\top w^*) \ge \gamma + y_i + \veps
 ~\Rightarrow~
 \alpha_i^* = -1,
 \\
 &
 UB(x_i^\top w^*) \le -\gamma + y_i - \veps
 ~\Rightarrow~
 \alpha_i^* = 1.
\end{align*}
In order to develop a sphere region
$\Theta_{w^*}$
in the primal space,
we extend the duality GAP-based safe feature screening approach
proposed in
\cite{ndiaye2015gap}
into safe sample screening context.
The result is summarized in the following lemma.
\begin{lemm}
 \label{lemm:safe-sample-screening}
 For any
 $\hat{w} \in {\rm dom}P_\lambda$
 and
 $\hat{\alpha} \in {\rm dom}D_\lambda$,
 \begin{align}
  \label{eq:primal_gap_sphere}
  w^* \in \Theta_{w^*} =
  \{ w \mid
  \| \hat{w} - w \|_2 \le
  \sqrt{2 G_{\lambda}(\hat{w}, \hat{\alpha}) / \lambda} \},
 \end{align}
 Furthermore,
 using the sphere form region
 $\Theta_{w^*}$
 in
 \eq{eq:primal_gap_sphere},
 for any
 $\hat{w} \in {\rm dom}P_\lambda$
 and
 $\hat{\alpha} \in {\rm dom}D_\lambda$,
 a pair of lower and upper bounds of
 $x_i^\top w^*$
 are given as
\begin{subequations}
\label{eq:bounds-for-sample-screening}
 \begin{align}
  LB(x_i^\top w^*) &= x_i^\top \hat{w} -
  \|x_i\|_2 \sqrt{2 G_{\lambda}(\hat{w}, \hat{\alpha}) / \lambda},  \\
  UB(x_i^\top w^*) &= x_i^\top \hat{w} +
  \|x_i\|_2 \sqrt{2 G_{\lambda}(\hat{w}, \hat{\alpha}) / \lambda }.
 \end{align}
\end{subequations}
\end{lemm}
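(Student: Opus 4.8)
The plan is to mirror the dual-space construction behind \eqref{eq:dual_gap_sphere}, but to work in the primal space and exploit strong convexity of the primal objective rather than strong concavity of the dual. The crucial observation is that the elastic net penalty \eqref{eq:regularization-term} contributes the term $\tfrac{1}{2}\|w\|_2^2$, so $\psi$ is $1$-strongly convex; since $x_i^\top w$ is linear and each $\ell_i$ is convex, the loss term $\tfrac1n\sum_{i\in[n]}\ell_i(x_i^\top w)$ is convex, and therefore $P_\lambda$ is $\lambda$-strongly convex.

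First I would record the standard consequence of $\lambda$-strong convexity evaluated at the minimizer $w^*$. Because $w^*$ minimizes $P_\lambda$ we have $0 \in \partial P_\lambda(w^*)$, so the strong-convexity inequality has a vanishing linear term and gives, for every feasible $\hat{w}$,
\[
\frac{\lambda}{2}\|\hat{w} - w^*\|_2^2 \le P_\lambda(\hat{w}) - P_\lambda(w^*).
\]
Next I would invoke weak duality. Since $P_\lambda(w^*) = D_\lambda(\alpha^*) \ge D_\lambda(\hat{\alpha})$ for any dual-feasible $\hat{\alpha}$, the right-hand side is bounded by the duality gap,
\[
P_\lambda(\hat{w}) - P_\lambda(w^*) \le P_\lambda(\hat{w}) - D_\lambda(\hat{\alpha}) = G_\lambda(\hat{w}, \hat{\alpha}).
\]
Combining the two displays and rearranging yields $\|\hat{w} - w^*\|_2 \le \sqrt{2 G_\lambda(\hat{w},\hat{\alpha})/\lambda}$, which is exactly the containment $w^* \in \Theta_{w^*}$ asserted in \eqref{eq:primal_gap_sphere}.

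Finally, for the linear bounds I would optimize $x_i^\top w$ over the ball $\Theta_{w^*}$. Writing $w = \hat{w} + \delta$ with $\|\delta\|_2 \le r$ for $r := \sqrt{2 G_\lambda(\hat{w},\hat{\alpha})/\lambda}$, the Cauchy--Schwarz inequality gives $|x_i^\top \delta| \le \|x_i\|_2\, r$, with equality attained at $\delta = \pm r\, x_i/\|x_i\|_2$; hence the maximum and minimum of $x_i^\top w$ over $\Theta_{w^*}$ are $x_i^\top \hat{w} \pm \|x_i\|_2\, r$, giving \eqref{eq:bounds-for-sample-screening}. I do not anticipate a serious obstacle, since the argument is essentially a transcription of the GAP-based feature-screening bound into the primal setting. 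The one point requiring care is pinning down the strong-convexity constant as exactly $\lambda$: it comes entirely from the quadratic part of the elastic net penalty, because the $\|w\|_1$ term and the loss contribute convexity but no strong convexity, and verifying that the linear term in the strong-convexity inequality drops out precisely because $w^*$ is the primal minimizer.
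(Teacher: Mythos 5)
Your proposal is correct and follows essentially the same route as the paper's proof: $\lambda$-strong convexity of $P_\lambda$ (from the quadratic part of the elastic net penalty), cancellation of the linear term via optimality of $w^*$, weak duality to bound $P_\lambda(\hat{w}) - P_\lambda(w^*)$ by the gap $G_\lambda(\hat{w},\hat{\alpha})$, and Cauchy--Schwarz over the resulting ball for the closed-form bounds. The only cosmetic difference is that you kill the linear term using $0 \in \partial P_\lambda(w^*)$, while the paper drops it via the variational inequality $g_{P_\lambda}(w^*)^\top(\hat{w}-w^*) \ge 0$; both are valid.
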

The proof of
Lemma~\ref{lemm:safe-sample-screening}
is presented in Appendix \ref{app:proofs}.

\section{Simultaneous safe screening}
\label{sec:simultaneous}
We have shown that,
for doubly sparse modeling problems,
safe screening rules for both of features and samples can be constructed respectively.
This has not been explored in depth regardless of its practical importance.
In this paper
we further develop the framework
in which
safe feature screening
and
safe sample screening
are alternately iterated.
A significant additional benefit of this framework
%
is that
the result of the previous safe feature screening
can be exploited
for making the primal region
$\Theta_{w^*}$
smaller,
meaning that
the performance of the next safe sample screening
can be improved,
and vice-versa.
%

The following two theorems formally state these ideas.
First,
Theorem~\ref{theo:Sample2Feature}
states that
we can obtain tighter upper bound
for feature screening
by exploiting the result of the previous safe sample screening.
\begin{theo}
 \label{theo:Sample2Feature}
 Consider a safe feature screening problem
 given
 arbitrary pair of primal and dual feasible solution
 $\hat{w} \in {\rm dom}P_\lambda$
 and
 $\hat{\alpha} \in {\rm dom}D_\lambda$.
 Furthermore,
 suppose that
 the result of the previous safe sample screening step assures the optimal values
 $\alpha_i^*$
 for
 a subset of the samples
 $i \in \cS \subset [n]$.
 Let
 $\cU_s := [n] \setminus \cS$,
 $r_D := \sqrt{2 n G_\lambda(\hat{w}, \hat{\alpha})/\gamma}$,
 and
 $\tilde{\alpha}$
 be an $n$-dimensional vector
 whose element is defined as
 $\tilde{\alpha}_i = \hat{\alpha}_i$
 for
 $i \in \cU_s$
 and
 $\tilde{\alpha}_i = \alpha^*_i$
 for
 $i \in \cS$.
Then,
$|X_{:j}^\top \alpha^*|$
is bounded from above by the following upper bound:
 \begin{align}
  \label{eq:new-UB-for-feature-screening}
  \!\!\!\!\!\!
  \tilde{UB}(\!|X_{:j}^\top \alpha^*|\!)
  \!\!:=\!\!
  |X_{:j}^\top \tilde{\alpha}|
  \!+\!
  \|X_{:j \cU_s} \|_2
  \sqrt{
  r_D^2 \!\!- \!\!
  \|\hat{\alpha}_{\cS} \! - \! \alpha^*_{\cS}\|_2^2
  },
 \end{align}
and
the upper bound
in \eq{eq:new-UB-for-feature-screening}
is tighter than or equal to
that
in \eq{eq:UB-for-feature-screening},
i.e.,
$\tilde{UB}(|X_{:j}^\top \alpha^*|) \le UB(|X_{:j}^\top \alpha^*|)$.
\end{theo}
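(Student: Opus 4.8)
The plan is to interpret $\tilde{UB}(|X_{:j}^\top\alpha^*|)$ in \eqref{eq:new-UB-for-feature-screening} as the exact maximum of $|X_{:j}^\top\alpha|$ over the \emph{restricted} region
\[
  \Theta'_{\alpha^*} := \Theta_{\alpha^*} \cap \{\alpha \mid \alpha_{\cS} = \alpha^*_{\cS}\},
\]
i.e. the dual sphere of \eqref{eq:dual_gap_sphere} intersected with the affine subspace that fixes the coordinates already determined by the previous safe sample screening. Since $\alpha^*\in\Theta_{\alpha^*}$ and trivially its $\cS$-block equals $\alpha^*_{\cS}$, the true optimum $\alpha^*$ lies in $\Theta'_{\alpha^*}$, so any upper bound on $\max_{\alpha\in\Theta'_{\alpha^*}}|X_{:j}^\top\alpha|$ is automatically valid for $|X_{:j}^\top\alpha^*|$. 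This reduces the theorem to (i) evaluating the constrained maximum in closed form and (ii) comparing it with \eqref{eq:UB-for-feature-screening}.

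For step (i) I would split every vector into its $\cS$- and $\cU_s$-blocks. On $\Theta'_{\alpha^*}$ one has $X_{:j}^\top\alpha = X_{:j\cS}^\top\alpha^*_{\cS} + X_{:j\cU_s}^\top\alpha_{\cU_s}$, and the norm constraint decomposes as $\|\hat\alpha-\alpha\|_2^2 = \|\hat\alpha_{\cS}-\alpha^*_{\cS}\|_2^2 + \|\hat\alpha_{\cU_s}-\alpha_{\cU_s}\|_2^2 \le r_D^2$, so the free block $\alpha_{\cU_s}$ ranges over a ball centered at $\hat\alpha_{\cU_s}$ of radius $\rho := \sqrt{r_D^2 - \|\hat\alpha_{\cS}-\alpha^*_{\cS}\|_2^2}$, the radicand being nonnegative precisely because $\alpha^*\in\Theta_{\alpha^*}$. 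Substituting $u := \alpha_{\cU_s}-\hat\alpha_{\cU_s}$, the objective becomes $|X_{:j}^\top\tilde\alpha + X_{:j\cU_s}^\top u|$, since $X_{:j\cS}^\top\alpha^*_{\cS} + X_{:j\cU_s}^\top\hat\alpha_{\cU_s} = X_{:j}^\top\tilde\alpha$ by the definition of $\tilde\alpha$. Maximizing $|c + a^\top u|$ over $\|u\|_2\le\rho$ yields $|c| + \rho\|a\|_2$ (Cauchy--Schwarz, with the sign of $u$ aligned to $a$), which is exactly \eqref{eq:new-UB-for-feature-screening}.

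For step (ii), the cleanest route is the observation that $\Theta'_{\alpha^*}\subseteq\Theta_{\alpha^*}$, so maximizing the same objective over the smaller set cannot increase it; since \eqref{eq:UB-for-feature-screening} is itself the exact maximum over $\Theta_{\alpha^*}$, the inequality $\tilde{UB}\le UB$ follows at once. If a self-contained algebraic comparison is preferred, I would instead bound $|X_{:j}^\top\tilde\alpha| \le |X_{:j}^\top\hat\alpha| + \|X_{:j\cS}\|_2\,\|\hat\alpha_{\cS}-\alpha^*_{\cS}\|_2$ using $X_{:j}^\top(\tilde\alpha-\hat\alpha) = X_{:j\cS}^\top(\alpha^*_{\cS}-\hat\alpha_{\cS})$, and then reduce the remaining claim to $\|X_{:j\cS}\|_2\, s + \|X_{:j\cU_s}\|_2\,\rho \le \|X_{:j}\|_2\, r_D$ with $s:=\|\hat\alpha_{\cS}-\alpha^*_{\cS}\|_2$; this is a two-dimensional Cauchy--Schwarz applied to $(\|X_{:j\cS}\|_2,\|X_{:j\cU_s}\|_2)$ and $(s,\rho)$, whose norms are $\|X_{:j}\|_2$ and $r_D$ respectively. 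I expect the only delicate points to be the correct handling of the absolute value in the constrained maximization and the bookkeeping of the block split; everything else is routine Cauchy--Schwarz, the conceptual content being simply that fixing coordinates shrinks the certification region.
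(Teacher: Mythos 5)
Your proposal is correct and follows essentially the same route as the paper's proof: both define the restricted region $\tilde{\Theta}_{\alpha^*} = \Theta_{\alpha^*} \cap \{\alpha \mid \alpha_{\cS} = \alpha^*_{\cS}\}$, compute the constrained maximum in closed form via the block decomposition of the sphere constraint to obtain \eqref{eq:new-UB-for-feature-screening}, and conclude $\tilde{UB} \le UB$ from the set inclusion $\tilde{\Theta}_{\alpha^*} \subseteq \Theta_{\alpha^*}$. Your additional remarks (nonnegativity of the radicand since $\alpha^* \in \Theta_{\alpha^*}$, and the alternative two-dimensional Cauchy--Schwarz comparison) are sound refinements beyond what the paper states explicitly.
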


The proof of
Theorem~\ref{theo:Sample2Feature}
is presented in Appendix.
By replacing the upper bounds in
\eq{eq:UB-for-feature-screening}
with
that in
\eq{eq:new-UB-for-feature-screening}
in the safe feature screening step,
there are more chance for screening out non-active features.
%

Next,
Theorem~\ref{theo:Feature2Sample}
states that we can obtain tighter lower and upper bounds
for sample screening
by exploiting the result of the previous safe feature screening.
\begin{theo}
 \label{theo:Feature2Sample}
 Consider a safe sample screening problem
 given
 arbitrary pair of primal and dual feasible solutions
 $\hat{w} \in {\rm dom}P_\lambda$
 and
 $\hat{\alpha} \in {\rm dom}D_\lambda$.
 Furthermore,
 suppose that
 the result of the previous safe feature screening step assures that
 $w_j^* = 0$
 for
 a subset of the features
 $j \in \cF \subset [d]$.
 Let
 $\cU_f := [d] \setminus \cF$,
 $r_P := \sqrt{2 G_\lambda(\hat{w}, \hat{\alpha})/\lambda}$,
 and
 $\tilde{w}$
 be a $d$-dimensional vector
 whose element is defined as
 $\tilde{w}_j = \hat{w}_j$
 for
 $j \in \cU_f$
 and
 $\tilde{w}_j = 0$
 for
 $j \in \cF$.
Then,
$x_i^\top w^*$
is bounded from below and above respectively by the following lower and upper bounds:
\begin{subequations} 
 \label{eq:new-bounds-for-sample-screening}
 \begin{align}
  \label{eq:new-L-for-sample-screening}
  \tilde{LB}(x_i^\top w^*)
  &:=
  x_{i}^\top \tilde{w}
  -
  \|x_{i \cU_f}\|_2
  \sqrt{
  r_P^2 - \|\hat{w}_{\cF}\|_2^2
  }
  \\
  \label{eq:new-UB-for-sample-screening}
  \tilde{UB}(x_i^\top w^*)
  &:=
  x_{i}^\top \tilde{w}
  +
  \|x_{i \cU_f}\|_2
  \sqrt{
  r_P^2 - \|\hat{w}_{\cF}\|_2^2
  }
 \end{align}
\end{subequations}
and
these bounds
in
\eq{eq:new-bounds-for-sample-screening}
are tighter than or equal to
those
in \eq{eq:bounds-for-sample-screening},
i.e.,
$\tilde{LB}(x_i^\top w^*) \ge LB(x_i^\top w^*)$
and
$\tilde{UB}(x_i^\top w^*) \le UB(x_i^\top w^*)$.
\end{theo}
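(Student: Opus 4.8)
The plan is to show that the information $w_j^* = 0$ for $j \in \cF$ produced by the previous feature screening step shrinks the feasible region for $w^*$ in the primal space, and that the bounds in \eq{eq:new-L-for-sample-screening} and \eq{eq:new-UB-for-sample-screening} are exactly the extrema of the linear functional $w \mapsto x_i^\top w$ over this smaller region. I would start from the sphere $\Theta_{w^*} = \{w : \|\hat{w} - w\|_2 \le r_P\}$ of Lemma~\ref{lemm:safe-sample-screening}, which already contains $w^*$, and intersect it with the affine constraint $w_{\cF} = 0$ to obtain the reduced region $\Theta'_{w^*} := \{w : \|\hat{w} - w\|_2 \le r_P,\ w_{\cF} = 0\}$, which still contains $w^*$ because both defining conditions are satisfied by the optimum.

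The key step is to rewrite the sphere constraint after imposing $w_{\cF} = 0$. Splitting the coordinates into $\cF$ and $\cU_f = [d] \setminus \cF$ and using $w^*_{\cF} = 0$, I would compute $\|\hat{w} - w^*\|_2^2 = \|\hat{w}_{\cU_f} - w^*_{\cU_f}\|_2^2 + \|\hat{w}_{\cF}\|_2^2 \le r_P^2$, so that $w^*_{\cU_f}$ lies in a sphere centered at $\hat{w}_{\cU_f}$ of radius $\sqrt{r_P^2 - \|\hat{w}_{\cF}\|_2^2}$. Since $w^*_{\cF} = 0$, the quantity of interest reduces to $x_i^\top w^* = x_{i\cU_f}^\top w^*_{\cU_f}$, and applying Cauchy--Schwarz to $x_{i\cU_f}^\top (w^*_{\cU_f} - \hat{w}_{\cU_f})$ over this reduced sphere yields the two extrema $x_{i\cU_f}^\top \hat{w}_{\cU_f} \pm \|x_{i\cU_f}\|_2 \sqrt{r_P^2 - \|\hat{w}_{\cF}\|_2^2}$. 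Recognizing that $x_i^\top \tilde{w} = x_{i\cU_f}^\top \hat{w}_{\cU_f}$ by the definition of $\tilde{w}$ then recovers \eq{eq:new-L-for-sample-screening} and \eq{eq:new-UB-for-sample-screening} exactly.

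For the tightness claim, the cleanest route is the inclusion $\Theta'_{w^*} \subseteq \Theta_{w^*}$: because the new bounds are the exact minimum and maximum of $w \mapsto x_i^\top w$ over the smaller set $\Theta'_{w^*}$, minimizing over a subset can only raise the minimum and maximizing over a subset can only lower the maximum, giving $\tilde{LB}(x_i^\top w^*) \ge LB(x_i^\top w^*)$ and $\tilde{UB}(x_i^\top w^*) \le UB(x_i^\top w^*)$ at once. Should a purely algebraic verification be preferred, one can write $x_i^\top \hat{w} - x_i^\top \tilde{w} = x_{i\cF}^\top \hat{w}_{\cF} \ge -\|x_{i\cF}\|_2 \|\hat{w}_{\cF}\|_2$ and apply Cauchy--Schwarz to the two-dimensional vectors $(\|x_{i\cF}\|_2, \|x_{i\cU_f}\|_2)$ and $(\|\hat{w}_{\cF}\|_2, \sqrt{r_P^2 - \|\hat{w}_{\cF}\|_2^2})$, whose norms are $\|x_i\|_2$ and $r_P$ respectively; this gives $\|x_{i\cF}\|_2 \|\hat{w}_{\cF}\|_2 + \|x_{i\cU_f}\|_2 \sqrt{r_P^2 - \|\hat{w}_{\cF}\|_2^2} \le \|x_i\|_2 r_P$, which rearranges into $\tilde{UB} \le UB$, and symmetrically for the lower bound.

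I do not anticipate a genuine obstacle here. The only point requiring care is the coordinate decomposition that converts the knowledge $w^*_{\cF} = 0$ into a strictly smaller radius $\sqrt{r_P^2 - \|\hat{w}_{\cF}\|_2^2}$ for the surviving component $w^*_{\cU_f}$, after which the argument parallels the derivation of the bounds in Lemma~\ref{lemm:safe-sample-screening} with this reduced radius in place of $r_P$, and the tightness follows immediately from the geometric subset relation.
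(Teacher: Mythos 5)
Your proof is correct and follows essentially the same route as the paper: both intersect the duality-gap sphere $\Theta_{w^*}$ with the constraint $w_{\cF}=0$, observe that the coordinate split turns this into a sphere of reduced radius $\sqrt{r_P^2 - \|\hat{w}_{\cF}\|_2^2}$ for $w_{\cU_f}$, compute the exact extrema of $w \mapsto x_i^\top w$ over it, and deduce tightness from the inclusion of the reduced region in the original sphere. Your supplementary algebraic check of tightness via Cauchy--Schwarz on the pair $(\|x_{i\cF}\|_2, \|x_{i\cU_f}\|_2)$, $(\|\hat{w}_{\cF}\|_2, \sqrt{r_P^2-\|\hat{w}_{\cF}\|_2^2})$ is a nice extra not present in the paper, but it is not needed.
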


The proof of
Theorem~\ref{theo:Feature2Sample}
is presented in Appendix.
By replacing the lower and the upper bounds in
\eq{eq:bounds-for-sample-screening}
with
those in
\eq{eq:new-bounds-for-sample-screening}
in the safe sample screening step,
there are more chance to be able to screen out non-active samples.
Again,
we note that the tighter bounds in
\eq{eq:new-bounds-for-sample-screening}
are different from the bounds
that would be obtained just by
using
$\tilde{w}$
as
the primal feasible solution
$\hat{w}$
in
\eq{eq:UB-for-feature-screening}.

Theorems~\ref{theo:Sample2Feature} and \ref{theo:Feature2Sample}
suggests that,
by alternately iterating
feature screening and sample screening,
more and more features and samples could be screened out.
This iteration process can be terminated
when there are few chances
to be able to screen out additional features and samples.
Such a termination condition can be developed
by using the results in the next section.

\section{Safe keeping of active features and samples}
\label{sec:safe-keeping}
Safe screening studies
initiated by the seminal work by
\cite{ghaoui2012safe}
enabled us to identify a part of non-active features/samples
before actually solving the optimization problem.
In other words,
safe screening is interpreted as an active set prediction method
without \emph{false negative error}
(an error that truly active features/samples are predicted as non-active).
In this section,
we show that,
by exploiting the two regions in the dual and the primal spaces,
we can develop
an active set prediction method
without \emph{false positive error}
(an error that truly non-active features/samples are predicted as active).
We call the latter approach as
\emph{safe feature/sample keeping}.

Safe feature keeping rule can be constructed by using the region
in the primal space.
Using
$\Theta_{w^*}$
in
\eq{eq:primal_gap_sphere},
we can get the lower bound of
$|w_j^*|$
for
$j \in [d]$
as
$LB(|w_j^*|) := |\hat{w}_j| - \sqrt{2 G_\lambda(\hat{w}, \hat{\alpha})}$.
Using this lower bound,
safe feature keeping rule is simply formulated as the following theorem.
\begin{theo}
 \label{theo:safe-feature-keeping}
 For an arbitrary pair of primal feasible solution
$\hat{w} \in {\rm dom}P_\lambda$
 and dual feasible solution
 $\hat{\alpha} \in {\rm dom}D_\lambda$,
 \begin{align*}
  |\hat{w}_j| - \sqrt{2 G_\lambda(\hat{w}, \hat{\alpha})} > 0
  ~\Rightarrow~
  w_j^* \neq 0 ~ {\text{for}} ~ j \in [d].
 \end{align*}
\end{theo}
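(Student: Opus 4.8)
The plan is to obtain the safe feature keeping rule as the exact mirror of safe feature screening: screening bounds $|w_j^*|$ \emph{from above} in order to certify $w_j^* = 0$, whereas keeping bounds $|w_j^*|$ \emph{from below} in order to certify $w_j^* \neq 0$. The only ingredient required is the primal sphere region established in Lemma~\ref{lemm:safe-sample-screening}, which I may assume; no new optimization-theoretic input is needed.

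First I would invoke \eqref{eq:primal_gap_sphere} to place the primal optimum inside the ball centered at $\hat{w}$, namely $\|\hat{w} - w^*\|_2 \le r_P$ with $r_P := \sqrt{2 G_\lambda(\hat{w}, \hat{\alpha})/\lambda}$. Next, for a fixed coordinate $j \in [d]$, I would bound the single-coordinate deviation by the full Euclidean norm, $|\hat{w}_j - w_j^*| \le \|\hat{w} - w^*\|_2 \le r_P$, which is simply the fact that projecting $\hat{w} - w^*$ onto one axis cannot increase its $L_2$ norm. The reverse triangle inequality then yields $|w_j^*| \ge |\hat{w}_j| - |\hat{w}_j - w_j^*| \ge |\hat{w}_j| - r_P$, so that $LB(|w_j^*|) := |\hat{w}_j| - r_P$ is a valid lower bound on $|w_j^*|$. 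Finally, whenever this lower bound is strictly positive we have $|w_j^*| > 0$, hence $w_j^* \neq 0$, which is precisely the claimed implication.

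The argument itself is elementary, and I do not anticipate any genuine obstacle within it: all of the technical work has already been discharged when constructing $\Theta_{w^*}$ in Lemma~\ref{lemm:safe-sample-screening} (via the $\lambda$-strong convexity of $P_\lambda$ together with the duality-gap inequality). The single point that deserves care is the coordinate-wise step, where one must use only the crude bound $|v_j| \le \|v\|_2$ for $v = \hat{w} - w^*$, since only the overall radius of the sphere, and not any sharper per-coordinate information, is available. The stated keeping rule then follows immediately by reading off the condition $LB(|w_j^*|) > 0$.
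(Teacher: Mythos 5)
Your argument is exactly the one the paper intends: take the primal duality-gap ball from Lemma~\ref{lemm:safe-sample-screening}, pass to coordinate $j$ via $|\hat{w}_j - w_j^*|\le\|\hat{w}-w^*\|_2$, and use the reverse triangle inequality to lower-bound $|w_j^*|$; positivity of that lower bound gives $w_j^*\neq 0$. The paper's own justification (the sentences immediately preceding Theorem~\ref{theo:safe-feature-keeping}) is nothing more than this.

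There is, however, one genuine problem: your closing claim that the implication you derive ``is precisely the claimed implication'' is false. What your derivation (correctly) yields is the rule
\begin{align*}
|\hat{w}_j| - \sqrt{2G_\lambda(\hat{w},\hat{\alpha})/\lambda} > 0 ~\Rightarrow~ w_j^*\neq 0,
\end{align*}
with the radius $r_P=\sqrt{2G_\lambda(\hat{w},\hat{\alpha})/\lambda}$ of \eqref{eq:primal_gap_sphere}, whereas the theorem as stated uses the radius $\sqrt{2G_\lambda(\hat{w},\hat{\alpha})}$, without the factor $1/\lambda$. These coincide only when $\lambda=1$. When $\lambda\ge 1$ your result does imply the stated rule (the stated test is then merely more conservative), but when $\lambda<1$ the stated radius is strictly smaller than the ball radius, so the ball still contains points whose $j$-th coordinate is zero and your argument cannot certify the stated implication at all. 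In fairness, the inconsistency originates in the paper itself: the text before the theorem quotes the lower bound $LB(|w_j^*|)=|\hat{w}_j|-\sqrt{2G_\lambda(\hat{w},\hat{\alpha})}$ while citing \eqref{eq:primal_gap_sphere}, whose radius carries the $1/\lambda$; your version is the one consistent with Lemma~\ref{lemm:safe-sample-screening}. Still, a careful proof must either flag this mismatch (most plausibly a typo in the theorem, which should read $\sqrt{2G_\lambda(\hat{w},\hat{\alpha})/\lambda}$) or restrict to $\lambda\ge 1$; asserting that your bound matches the theorem as literally written is an error.
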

%
Similarly,
safe sample keeping rule can be constructed
by using a region
in the dual space.
The condition for
$\alpha_i^*$
being active is written as
$\alpha_i^* \neq -1, 0, 1$.
This can be guaranteed when
the condition
$\alpha^*_i \in (0,1)$
or
$\alpha^*_i \in (-1,0)$ holds
for the $i$-th element of
$\forall \alpha \in \Theta_{\alpha^*}$.
%
%
Since $\Theta_{\alpha^*}$ is a sphere,
safe sample keeping rule can be simply derived
by
\eq{eq:dual_gap_sphere}.
\begin{theo}
 \label{theo:safe-sample-keeping}
 For an arbitrary pair of primal feasible solution
$\hat{w} \in {\rm dom}P_\lambda$
 and dual feasible solutionx
 $\hat{\alpha} \in {\rm dom}D_\lambda$,
 \begin{align*}
  &|\hat{\alpha}_i| \!\! - \!\!
  \sqrt{2 n G_\lambda (\hat{w}, \hat{\alpha})/\gamma} > 0
  \text{ and }
  | \hat{\alpha}_i |
  \!\!+ \!\!
  \sqrt{2nG_\lambda(\hat{w}, \hat{\alpha})/\gamma}\! < \! 1
  \Rightarrow
  \alpha_i^* \neq 0, \pm 1
  \text{ for } i \in [n].
 \end{align*}
\end{theo}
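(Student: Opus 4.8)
The plan is to exploit the fact that the dual feasibility region $\Theta_{\alpha^*}$ in \eq{eq:dual_gap_sphere} is a Euclidean ball centered at $\hat{\alpha}$ with radius $r_D := \sqrt{2 n G_\lambda(\hat{w}, \hat{\alpha})/\gamma}$, together with the guarantee $\alpha^* \in \Theta_{\alpha^*}$ already established in \S\ref{subsec:safe-feature-screening}. First I would project this containment onto the $i$-th coordinate: since $|\hat{\alpha}_i - \alpha_i^*| \le \|\hat{\alpha} - \alpha^*\|_2 \le r_D$, the $i$-th component of the optimal dual solution is confined to the explicit interval $\alpha_i^* \in [\hat{\alpha}_i - r_D, \hat{\alpha}_i + r_D]$. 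Thus the coordinate projection of the sphere is an interval whose endpoints are fully determined by $\hat{\alpha}_i$ and $r_D$.

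Next I would recall that, for both the smoothed hinge and the smoothed $\veps$-insensitive losses, the domains of the dual conjugates in \eq{eq:conj_smoothed_hinge} and \eq{eq:conj_soft_insensitive} constrain $\alpha_i^*$ to $[0,1]$, $[-1,0]$, or $[-1,1]$, so that a sample is active precisely when $\alpha_i^* \in (0,1) \cup (-1,0)$, i.e. $\alpha_i^* \neq 0, \pm 1$. The goal is therefore to show that the two hypotheses force the interval $[\hat{\alpha}_i - r_D, \hat{\alpha}_i + r_D]$ to lie entirely inside one of the two open intervals $(0,1)$ or $(-1,0)$.

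The one point that requires care is that $(0,1) \cup (-1,0)$ is disconnected, so the containment cannot be argued uniformly; I would split on the sign of $\hat{\alpha}_i$. Observe first that the hypothesis $|\hat{\alpha}_i| - r_D > 0$ already forces $|\hat{\alpha}_i| > r_D \ge 0$, hence $\hat{\alpha}_i \neq 0$. When $\hat{\alpha}_i > 0$, the two hypotheses read $\hat{\alpha}_i - r_D > 0$ and $\hat{\alpha}_i + r_D < 1$, placing the whole interval inside $(0,1)$; when $\hat{\alpha}_i < 0$, substituting $|\hat{\alpha}_i| = -\hat{\alpha}_i$ rewrites them as $\hat{\alpha}_i + r_D < 0$ and $\hat{\alpha}_i - r_D > -1$, placing the interval inside $(-1,0)$. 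In either case $\alpha_i^*$ lies strictly between the relevant breakpoints, so $\alpha_i^* \neq 0, \pm 1$, which is the assertion.

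I expect no serious obstacle beyond this case split; the whole argument is a one-dimensional projection of the sphere followed by the elementary observation that the absolute-value form of the two hypotheses simultaneously encodes the ``interval inside $(0,1)$'' and ``interval inside $(-1,0)$'' conditions. If one prefers to avoid the explicit sign analysis, the same conclusion follows from bounding $|\alpha_i^*|$ directly via the triangle inequality, $|\alpha_i^*| \le |\hat{\alpha}_i| + r_D < 1$, and the reverse triangle inequality, $|\alpha_i^*| \ge |\hat{\alpha}_i| - r_D > 0$, which is perhaps the cleanest way to present the final two lines.
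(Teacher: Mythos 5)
Your proposal is correct and takes essentially the same route as the paper, which derives this rule directly from the sphere $\Theta_{\alpha^*}$ in \eq{eq:dual_gap_sphere} by noting that $\alpha^* \in \Theta_{\alpha^*}$ forces the $i$-th coordinate $\alpha_i^*$ into the interval $[\hat{\alpha}_i - r_D, \hat{\alpha}_i + r_D]$, which under the two hypotheses lies inside $(0,1)$ or $(-1,0)$. The paper only sketches this ("safe sample keeping rule can be simply derived" from the sphere), and your triangle/reverse-triangle inequality formulation $0 < |\hat{\alpha}_i| - r_D \le |\alpha_i^*| \le |\hat{\alpha}_i| + r_D < 1$ is a clean way of filling in exactly the intended details.
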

When we use safe screening approaches,
there is a trade-off
between
the computational costs of evaluating safe screening rules
and
the computational time saving by screening out some features/samples.
If we know in advance that
some of the features/samples cannot be non-active
by using safe keeping approaches,
we do not have to waste the rule evaluation costs for those features/samples.
Safe screening rule evaluation costs would be more significant
in dynamic screening
and
our simultaneous screening
scenarios
because rules are repeatedly evaluated.
By combining safe screening and safe keeping approaches,
we can get an information about how many features/samples are not yet determined to be active or non-active.
This information can be also used as a stopping criteria of
dynamic screening and our simultaneous screening.
We can stop evaluating safe screening rules
when there only remain few features/samples
that have not been determined to be active or non-active.
Specifically,
if the fraction of the sum of
the safely screened features/samples
and
the safely kept features/samples
is close to one,
then we have few chances to be able to safely screen out additional features/samples.
The additional computational complexities\footnote{
Here, we assume that the duality gap
$G_\lambda(\hat{w}, \hat{\alpha})$
have been already computed.
}
of
safe feature/sample keeping for a single feature/sample
is $\cO(1)$,
which is negligible compared with
$\cO(n)$
complexity for safe feature screening
and
$\cO(d)$
complexity for safe sample screening.

We finally note that,
in our particular working problem of
$L_1$ smooth SVC
and
$L_1$ smooth SVR,
safe keeping is also possible
by using the KKT optimality conditions in
\eq{eq:kkt1_elastic_net}
for features,
and
\eq{eq:kkt2_smoothed_hinge_posi} - \eq{eq:kkt2_soft_insensitive}
for samples.
We describe the details in the Appendix.

\section{LP-based simultaneous safe screening}
\label{sec:LP-based}
In this section,
we consider another empirical risk minimization problem
that induces sparsities
both in features and samples.
Specifically,
we study a problem
with
$L_1$-penalty
$\psi(w) = \|w\|_1$
and
vanilla hinge loss
$\ell_i(w) = \max\{0, 1 - y_i x_i^\top w\}$,
which we call
\emph{LP-based SVM}
because
it is casted into a linear program (LP).
LP-based SVM has been studied in
\cite{bradley1998feature,zhu20041},
and also in boosting context.
LPBoost
\cite{demiriz2002linear}
solves LP-based SVM
via the column generation approach of linear programming.
while
ERLPBoost \cite{warmuth-etal:alt08}
is a variant of LPBoost
obtained by adding a small entropic term in the LP objective.
Sparse LPBoost \cite{hatano-takimoto:ds09}
is simiilar to simultaneous screening
in that it iteratively solves LP sub-problems for features and samples.
LP-based SVM induces
feature sparsity due to $L_1$-penalty
and
sample sparsity due to the property of hinge loss.

The convex conjugate functions of
$L_1$-penalty
and
vanilla hinge loss
are respectively written as
\begin{align}
  \label{eq:l1reg_conj}
  \psi^* \left( v \right) &:= \begin{cases}
    0 & (\| v \|_{\infty} \le 1), \\
    \infty & (\text{otherwise}),
  \end{cases}
  \\
  \label{eq:hinge_conj}
  \ell^*_i(\alpha_i) &:= \begin{cases}
    y_i \alpha_i & y_i \alpha_i \in [-1,0], \\
    \infty & (\text{otherwise}),
  \end{cases}
\end{align}
and the dual problem is written as
\begin{align*}
  \max_{\alpha} D_{\lambda} (\alpha) :=
  \max_{\alpha}
  \left\{ y^\top \alpha \right\}
  ~ {\text{s.t.}} ~
  \left\| \frac{1}{\lambda n} \alpha_i x_i \right\|_{\infty} \le 1,
  ~ y_i \alpha_i \in [0,1] ~~ \forall i \in [n].
\end{align*}

\subsection{Safe feature screening for LP-based SVM}
\label{subsec:safe-feature-screening-LP-SVM}
Feature safe screening for LP-based SVM was studied
in the seminal safe feature screening paper by
\cite{ghaoui2012safe}.
However,
the method presented in their paper
requires a precise optimal solution of an LP-based SVM with a different penalty parameter $\lambda$.
This requirement is impractical
because precise optimal solutions are often difficult to get numerically
as recently pointed out
by
\cite{fercoq2015mind}.
Here,
we present a novel safe feature screening method for LP-based SVM
that only requires an arbitrary pair of
a primal feasible solution
$\hat{w} \in {\rm dom} P_\lambda$
and
a dual feasible solution
$\hat{\alpha} \in {\rm dom} D_\lambda$.
The proposed safe feature screening method for
LP-based SVM
is summarized in the following theorem.
\begin{theo}
 \label{theo:l1l1_feature_screen}
 Consider safe feature screening problem
 given an arbitrary pair of
 a primal feasible solution
 $\hat{w} \in {\rm dom}P_\lambda$
 and
 a dual feasible solution
 $\hat{\alpha} \in {\rm dom}D_\lambda$.
 Let
 $\ell_q := \lfloor y^\top \hat{\alpha} \rfloor$,
 $u_q = \lfloor P_{\lambda}(\hat{w})  \rfloor$,
 $Z := [y_1 x_1, \ldots, y_n x_n]^\top \in \RR^{n \times d}$,
 and
 $Z_{:j}^\prime \in \RR^n$,
 $j \in [d]$,
 be the vector
 obtained by sorting
 $Z_{:j}$
 in increasing order.
 Furthermore,
 let
 $n_{Z_{:j}^\prime}$
 and
 $p_{Z_{:j}^\prime}$
 represent
 the numbers of negative and positive elements of
 $Z_{:j}^\prime$,
 respectively.
 Then,
 \begin{align*}
  LB(X_{:j}^\top \alpha^*) < - \lambda n
  ~\text{and}~
  UB(X_{:j}^\top \alpha^*) > \lambda n
  ~\Rightarrow~
  w_j^* = 0,
 \end{align*}
 where
   \begin{align*}
    LB (X_{:j}^\top \alpha^* ) := \begin{cases}
    \sum_{i=1}^{l_q}
    Z'_{ij} + (y^\top \hat{\alpha} - l_q) Z'_{(l_q+1) j}
    & (n_{Z'_{:j}} < l_q + 1), \\
    \sum_{i=1}^{u_q} Z'_{ij} +
    (P_{\lambda}(\hat{w}) - u_q),
     Z'_{u_q j}
    & (n_{Z'_{:j}} > u_q ) \\
    \sum_{i=1}^{n} \min \{0, Z'_{ij} \}
    & ({\rm otherwise}),
    \end{cases}
    \end{align*}
    \begin{align*}
    UB (X_{:j}^\top \alpha^* ) :=
    \begin{cases}
    \!\sum_{i=n - l_q}^{n}
    \!\!Z'_{ij} + (y^\top \hat{\alpha} - l_q) Z'_{(n-l_q-1) j}
    &\!\!\!\!\!\! (p_{Z'_{:j}} < l_q + 1), \\
    \!\sum_{i=n-u_q}^{n} \!\! Z'_{ij} +
    (P_{\lambda}(\hat{w}) - u_q)
     Z'_{(n - u_q -1) j}
    &\!\!\!\!\!\! (p_{Z'_{:j}} > u_q ), \\
    \!\sum_{i=1}^{n} \max \{0, Z'_{ij} \}
    &\!\!\!\!\!\! ({\rm otherwise}).
    \end{cases}
  \end{align*}
\end{theo}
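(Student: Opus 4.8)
The plan is to recast both bounds as optimal values of small linear programs over the feasible region of the dual, and to solve them with a continuous-knapsack (greedy) argument. First I would reparametrize the dual variable: writing $\beta_i := y_i \alpha_i^*$ and using $Z_{ij} = y_i X_{ij}$ together with $y_i^2 = 1$, the quantity to be bounded becomes $X_{:j}^\top \alpha^* = \sum_{i \in [n]} Z_{ij}\beta_i$. Dual feasibility of the hinge conjugate forces $\beta_i \in [0,1]$ for every $i$, while the total mass $T := \sum_i \beta_i = y^\top\alpha^*$ is pinned between two computable numbers: dual optimality gives $y^\top\alpha^* \ge y^\top\hat\alpha$ (since $\hat\alpha$ is dual feasible and $D_\lambda(\alpha)=y^\top\alpha$), and weak duality gives $y^\top\alpha^* = D_\lambda(\alpha^*) \le P_\lambda(\hat w)$ for the primal feasible $\hat w$. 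Hence $\alpha^*$ lies in the relaxed set $\{\beta : \beta_i \in [0,1],\ \ell \le \sum_i \beta_i \le u\}$ with $\ell = y^\top\hat\alpha$ and $u = P_\lambda(\hat w)$, so minimizing and maximizing $\sum_i Z_{ij}\beta_i$ over this set produces valid $LB$ and $UB$. I keep in mind the KKT stationarity of the $L_1$ penalty, which forces $|X_{:j}^\top\alpha^*| = \lambda n$ whenever $w_j^* \neq 0$; consequently any strict two-sided bound $-\lambda n < X_{:j}^\top\alpha^* < \lambda n$ certifies $w_j^* = 0$, and this is exactly the implication the final screening rule records.

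Next I would solve the minimization, the maximization being symmetric. Since the objective is linear and the only coupling is the scalar $T$, I first fix $T$ and solve the inner continuous knapsack: to make $\sum_i Z_{ij}\beta_i$ smallest, place all available mass on the most negative coordinates, i.e. sort $Z_{:j}$ increasingly into $Z'_{:j}$ and set $\beta = 1$ from the top of the sorted list, letting one fractional coordinate absorb the remainder $T - \lfloor T\rfloor$. This makes the optimal inner value a convex, piecewise-linear function of $T$ whose marginal (the sorted entry currently being filled) increases with $T$ and changes sign exactly at the $n_{Z'_{:j}}$-th position, where $n_{Z'_{:j}}$ counts the negative entries. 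Minimizing this convex function over $T \in [\ell, u]$ then gives three regimes: the minimizer is $T=\ell$ when the marginal is already nonnegative there (equivalently $n_{Z'_{:j}} \le \ell_q$), is $T=u$ when the marginal is still negative at $u$ (equivalently $n_{Z'_{:j}} > u_q$), and is the interior flat point $T = n_{Z'_{:j}}$ otherwise, at which precisely the negative entries are filled and the value equals $\sum_i \min\{0, Z'_{ij}\}$. These three regimes reproduce the three branches of the stated $LB$; repeating the argument filling the largest entries first, governed by the count $p_{Z'_{:j}}$ of positive entries, yields $UB$.

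The hard part will be justifying the joint optimization over the vector $\beta$ and the scalar mass $T$ at once, rather than either alone: I must show that decoupling into an inner knapsack in $\beta$ followed by an outer one-dimensional minimization in $T$ is lossless, and then verify that the sign-change location of the convex marginal aligns with the floors $\ell_q$ and $u_q$ so that the fractional fill lands on the indices $Z'_{(\ell_q+1)j}$ and $Z'_{(u_q+1)j}$ of the formula. In particular I would double-check the boundary index in the $n_{Z'_{:j}} > u_q$ branch, where the fractional coordinate should be $Z'_{(u_q+1)j}$. The remaining ingredients — the two duality bounds on $T$ and the reading off of the screening implication from the KKT condition — are routine once the linear-programming structure is in place.
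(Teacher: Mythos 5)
Your proposal follows essentially the same route as the paper's own proof: its supporting lemma (Lemma~\ref{lemm:l1l1_dual_region}) constructs exactly your relaxed region $\{\alpha : y_i\alpha_i \in [0,1]\ \forall i,\ y^\top\hat{\alpha} \le y^\top\alpha \le P_\lambda(\hat{w})\}$ from dual optimality and weak duality, then solves the two LPs by exhibiting the same greedy continuous-knapsack solutions in the same three regimes keyed on $n_{Z'_{:j}}$ and $p_{Z'_{:j}}$ versus $l_q$ and $u_q$, and closes with the same $L_1$ KKT argument. Your two flagged concerns are well placed: the paper's candidate $\bar{\alpha}^{(3)}$ does put the fractional mass at index $u_q+1$ (so the $Z'_{u_q j}$ in the displayed $LB$ is a typo, as you suspected), and the screening implication as literally printed ($LB < -\lambda n$ and $UB > \lambda n$) has its inequalities reversed relative to the correct rule $-\lambda n < X_{:j}^\top\alpha^* < \lambda n \Rightarrow w_j^* = 0$ that both you and the paper's proof actually establish.
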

\vspace*{-0.4cm}
The proof of
Theorem~\ref{theo:l1l1_feature_screen}
is presented in Appendix.

\subsection{Safe sample screening for LP-based SVM}
\label{subsec:safe-sample-screening-LP-SVM}
Here, we develop a novel safe sample screening method for LP-based SVM as summarized in the following theorem.
\begin{theo}
 \label{theo:l1l1_sample_screen}
 Consider safe sample screening
 given an arbitrary primal feasible solution
 $\hat{w} \in {\rm dom}P_\lambda$.
 Let
 $g_{\ell_i}(w)$
 be a subgradient of
 vanilla hinge loss
 $\ell_i(w)$
 for
 $w \in {\rm dom}P_\lambda$,
 and define
 $k := \lambda \|\hat{w}\|_1 + \frac{1}{n} \sum_{i \in [n]} g_{\ell_i(\hat{w})}^\top \hat{w}$.
 Then,
 \begin{align*}
  &
  y_i = +1
  \text{ and }
  LB(x_i^\top w^*) > +1
  ~\Rightarrow~
  \alpha_i^* = +1,
  \\
  &
  y_i = +1
  \text{ and }
  UB(x_i^\top w^*) < +1
  ~\Rightarrow~
  \alpha_i^* = 0,
  \\
  &
  y_i = -1
  \text{ and }
  LB(x_i^\top w^*) > -1
  ~\Rightarrow~
  \alpha_i^* = 0,
  \\
  &
  y_i = -1
  \text{ and }
  UB(x_i^\top w^*) < -1
  ~\Rightarrow~
  \alpha_i^* = -1,
 \end{align*}
 where
 \begin{subequations}
 \begin{align*}
  \!\!&LB(x_i^\top\! w^*) \!\!:=\!\! \max_{\mu > 0} \{\mu k \}
  ~
  {\rm{s.t.}}
  \left\|\!\! - \frac{1}{\lambda} x_i -
  \frac{\mu}{\lambda n} \sum_{i=1}^{n} g_{\ell_i}(\hat{w}) \right\|_{\infty}
  \le \mu, \\
  \!\!&UB(x_i^\top\! w^*)\!\!:=\!\! \max_{\mu > 0} \{\mu k \}
    ~
  {\rm{s.t.}}
  \left\|\!\!\!\! \phantom{-} \frac{1}{\lambda} x_i -
  \frac{\mu}{\lambda n} \sum_{i=1}^{n} g_{\ell_i}(\hat{w}) \right\|_{\infty}
  \le \mu.
 \end{align*}
 \end{subequations}
\end{theo}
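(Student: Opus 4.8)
The statement has two parts: the four screening implications, and the closed forms of $LB$ and $UB$. First I would establish the implications exactly as in the smoothed-hinge derivation of \S\ref{subsec:safe-sample-screening}, but using the conjugate \eqref{eq:hinge_conj} of the vanilla hinge loss. Specializing the KKT condition \eqref{eq:kkt2}, $x_i^\top w^* \in \partial \ell_i^*(-\alpha_i^*)$, and computing the subdifferential of the piecewise-linear $\ell_i^*$, one reads off for each sign of $y_i$ which values of $\alpha_i^*$ are compatible with a given position of $x_i^\top w^*$ relative to the threshold $\pm 1$. Taking contrapositives then yields the four implications; replacing the exact $x_i^\top w^*$ by a valid lower/upper bound keeps the implications safe. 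This part is routine.

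The substance is the derivation of the bounds, and here the obstacle is that $\psi = \|\cdot\|_1$ is not strongly convex, so the duality-gap sphere of Lemma~\ref{lemm:safe-sample-screening} is unavailable and a different, polyhedral-type region is needed. My plan is to build a convex region $R \ni w^*$ from two ingredients: optimality of $w^*$, which gives $P_\lambda(w^*) \le P_\lambda(\hat w)$, and convexity of the loss $L(w) := \frac1n\sum_{i}\ell_i(x_i^\top w)$, which gives the first-order inequality $L(w^*) \ge L(\hat w) + g^\top(w^* - \hat w)$ for the subgradient $g := \frac1n\sum_i g_{\ell_i}(\hat w)$. Eliminating $L$ between these two inequalities cancels the $L(\hat w)$ terms and leaves the single constraint $\lambda\|w^*\|_1 + g^\top w^* \le \lambda\|\hat w\|_1 + g^\top\hat w = k$, that is, $w^* \in R := \{w : \lambda\|w\|_1 + g^\top w \le k\}$.

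Given $R$, the bounds are the extreme values of the linear functional over $R$: $UB(x_i^\top w^*) = \max_{w \in R} x_i^\top w$ and $LB(x_i^\top w^*) = \min_{w \in R} x_i^\top w$, both valid because $w^* \in R$. I would evaluate each by Lagrangian duality, attaching a multiplier $\mu \ge 0$ to the constraint. The inner maximization over $w$ then has the form $\sup_w\{(\pm x_i - \mu g)^\top w - \mu\lambda\|w\|_1\}$, which is exactly the conjugate of $\mu\lambda\|\cdot\|_1$ and hence equals $0$ when $\|\pm x_i - \mu g\|_\infty \le \mu\lambda$ and $+\infty$ otherwise (the $L_\infty$-ball indicator). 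This collapses each bound to a one-dimensional program in $\mu$ with objective $\mu k$ and the constraint $\|\pm\tfrac1\lambda x_i - \tfrac{\mu}{\lambda n}\sum_i g_{\ell_i}(\hat w)\|_\infty \le \mu$ after dividing by $\lambda$, where the top sign recovers the stated $UB$ and the bottom sign the stated $LB$.

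The hard part is this passage to the one-dimensional problem: I must (i) invoke strong duality, for which Slater's condition needs checking (for example $w = 0$ is strictly feasible when $k > 0$, with the degenerate cases $k = 0$ or empty interior handled by a limiting argument); (ii) confirm that $R$ is bounded so the bounds are finite, which is where the relation between $\|g\|_\infty$ and $\lambda$ enters and determines whether the feasible set of $\mu$ is a bounded interval or a ray, thereby fixing the correct orientation of the extremization over $\mu$; and (iii) note that any valid subgradient $g_{\ell_i}(\hat w)$ may be used, since the first-order inequality defining $R$ holds for every subgradient. Once these points are settled, the closed forms follow directly.
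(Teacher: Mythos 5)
Your plan coincides with the paper's proof in its core mechanics: both build the region $\Theta_{w^*}=\{w:\lambda\|w\|_1+g^\top w\le k\}$ with $g:=\frac{1}{n}\sum_{i\in[n]}g_{\ell_i}(\hat w)$, and both then extremize $x_i^\top w$ over it by attaching a multiplier $\mu$ and using that the inner problem is the conjugate of $\mu\lambda\|\cdot\|_1$, i.e.\ an $L_\infty$-ball constraint. Where you genuinely differ is the derivation of the region (the paper's Lemma~\ref{lemm:l1l1_primal_region}): the paper starts from the variational inequality $(\lambda g_{\psi}(w^*)+g_{\ell}(w^*))^\top(w^*-\hat w)\le 0$ (Bertsekas Prop.~B.24), applies monotonicity of the loss subgradient, and then invokes the specific structure of $\partial\|\cdot\|_1$ (namely $g_{\psi}(w^*)^\top w^*=\|w^*\|_1$ and $g_{\psi}(w^*)^\top\hat w\le\|\hat w\|_1$), whereas you combine $P_\lambda(w^*)\le P_\lambda(\hat w)$ with the first-order convexity inequality for the loss so that the loss values cancel. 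Your route reaches the identical constraint in two lines, needs nothing about the $L_1$ subdifferential, and is the cleaner argument. You also supply two things the paper's proof of this theorem omits: the KKT step \eqref{eq:kkt2} with the hinge conjugate \eqref{eq:hinge_conj} that converts the bounds into the four screening implications, and the Slater/boundedness caveats (the paper exchanges $\min_w\max_\mu$ and $\max_\mu\min_w$ with no justification).

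There is, however, one concrete point where your proposal would fail if executed literally: the claim that both one-dimensional programs come out ``with objective $\mu k$,'' recovering the stated formulas. Weak duality orients the two problems oppositely. For the lower bound, dual feasible values are $-\mu k$ and lie \emph{below} $\min_{w\in\Theta_{w^*}}x_i^\top w$, so the correct form is $LB=\max_{\mu>0}\{-\mu k\}$ subject to the stated constraint; for the upper bound, feasible values $\mu k$ lie \emph{above} $\max_{w\in\Theta_{w^*}}x_i^\top w$, so $UB=\min_{\mu>0}\{\mu k\}$. The stated $\max_{\mu>0}\{\mu k\}$ for both is an error in the paper itself: in the appendix proof the min--max exchange silently turns $-\mu k$ into $+\mu k$, and the identity $\max_{w\in\Theta_{w^*}}x_i^\top w=-\min_{w\in\Theta_{w^*}}x_i^\top w$ used for the upper bound is false. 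A one-line sanity check: take $n=d=1$, $x_1=y_1=1$, $\lambda<1$, $\hat w=w^*=1$, $g_{\ell_1}(\hat w)=0$, so $k=\lambda$ and $\Theta_{w^*}=[-1,1]$; the true lower bound is $-1$, which equals $\max_{\mu}\{-\mu k\}$ over the feasible ray $\mu\ge 1/\lambda$, while the stated $\max_{\mu}\{\mu k\}$ over that ray is $+\infty$. So your method is the right one—and notably your own caveat (ii) about the orientation of the extremization is exactly where the issue surfaces—but carried through honestly it corrects the stated formulas rather than recovers them, which is the same defect as in the paper's own proof.
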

The proof of
Theorem~\ref{theo:l1l1_sample_screen}
is presented in Appendix.
Although the lower and the upper bounds
are not explicitly presented,
these optimization problems can be easily solved
because they are just linear programs with one variable $\mu > 0$.
\begin{figure}[!h]
  \begin{center}
   \includegraphics[clip,width=16cm]{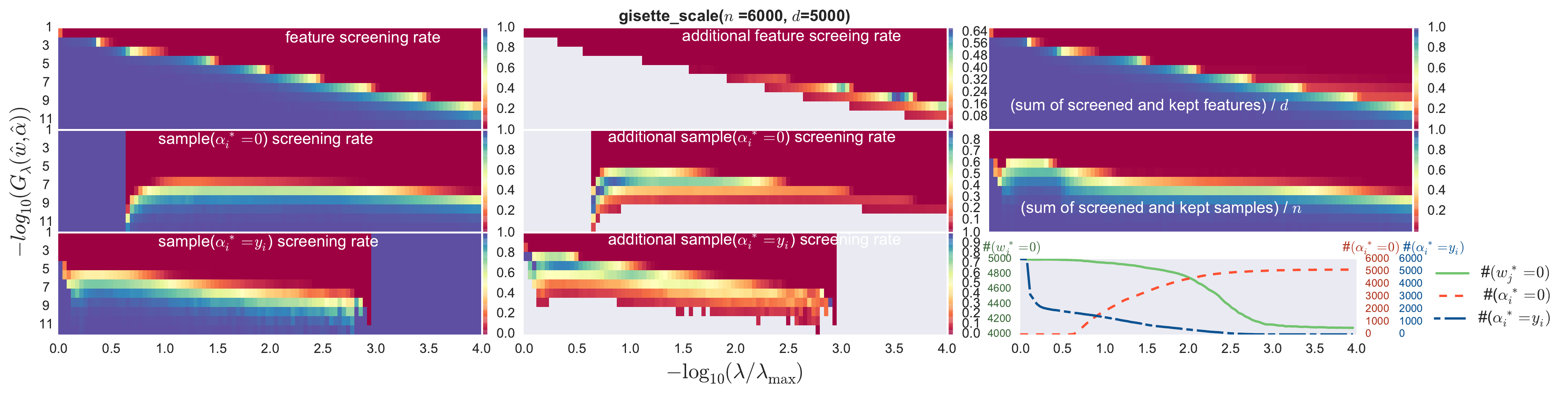}\\
   \includegraphics[clip,width=16cm]{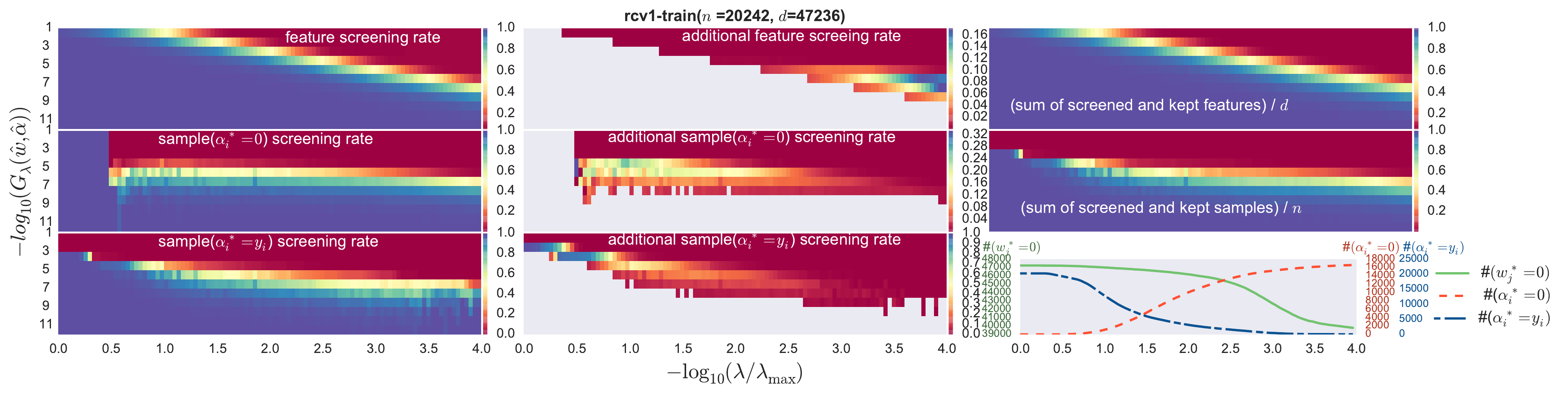}\\
   \includegraphics[clip,width=16cm]{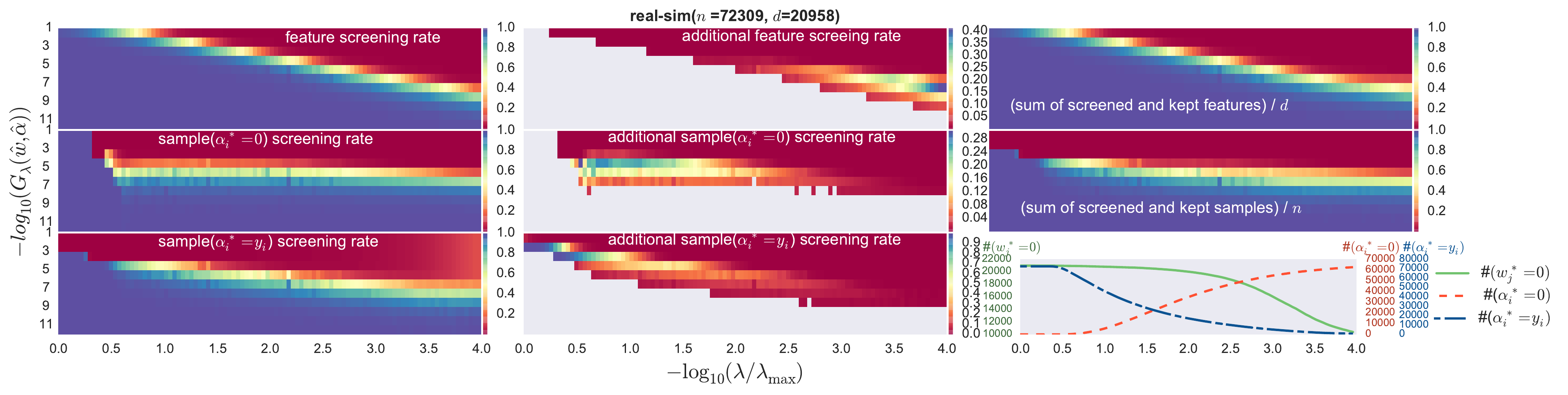}
   \includegraphics[clip,width=16cm]{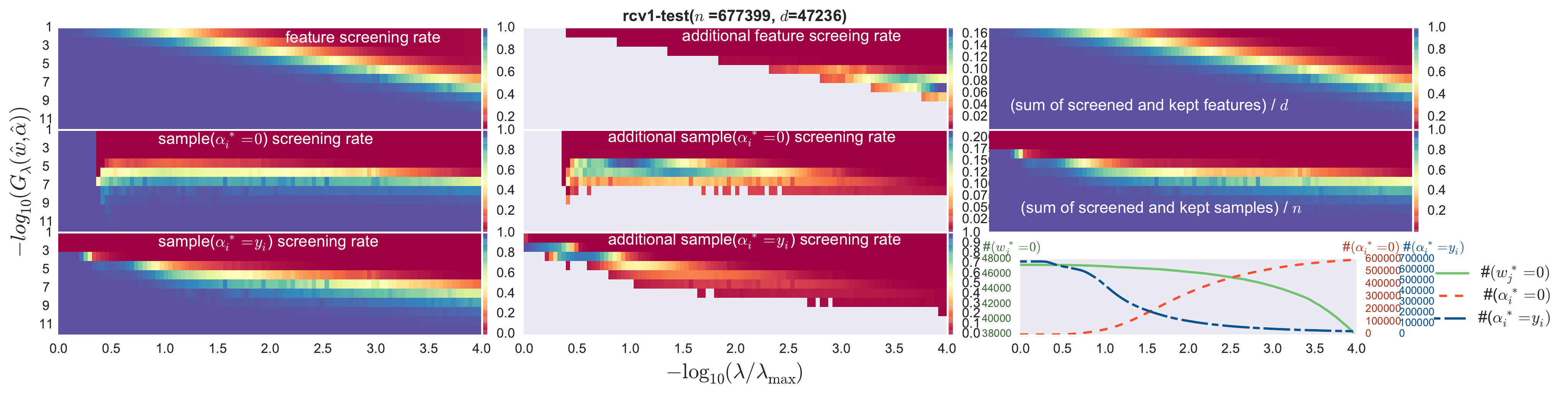}
   \caption{Safe screening and keeping rates for classification problems
   (for {\tt real-sim} and {\tt rcv1-test} datasets).
   The three plots
   in the left show
   the individual safe feature/sample screening rates (the middle and the bottom ones are for $\alpha_i^* = 0$ and $\alpha_i^* = \pm 1$, respectively).
   The three plots
   in the center show
   the additional safe screening rates by simultaneously considering feature and sample screenings.
   The gray area
   in these center plots
   corresponds to the blue area
   in the corresponding left plot.
   In these gray area, the individual safe screening performances are good enough (screening rate $> 0.95$) and additional screening is unnecessary.
   The top right and middle right plots show the safe keeping rates for feature and samples, respectively.
   The bottom right plot shows the numbers of active features and samples for various values of $\lambda$.}
  \label{fig:screeing_rate_smoothed_hinge}
  \end{center}
\end{figure}
As we discussed in \S\ref{sec:simultaneous},
by alternatively iterating
safe feature screening in
\S\ref{subsec:safe-feature-screening-LP-SVM}
and
safe sample screening in
\S\ref{subsec:safe-sample-screening-LP-SVM},
we can make the regions in the dual and the primal regions step by step,
indicating that the chance of screening out more features and samples increases~\footnote{
The tighter bounds
can be obtained
by exploiting the previous safe screening results
as we discussed in \S\ref{sec:simultaneous}
although we do not explicitly present those bounds here
due to the space limitation.
}.

\section{Numerical experiments}
\label{sec:experiments}
We demonstrate the advantage of
simultaneous safe screening
through numerical experiments.
After we describe the experimental setups
in
\S\ref{subsec:experimental-setup},
we report the results on
safe screening and keeping rates,
and
computation time savings
in
\S\ref{subsec:safe-screening-keeping-rate}
and
\S\ref{subsec:computation-time-savings},
respectively.
%
%

\begin{figure}[t]
  \begin{center}
   \includegraphics[clip,width=17cm]{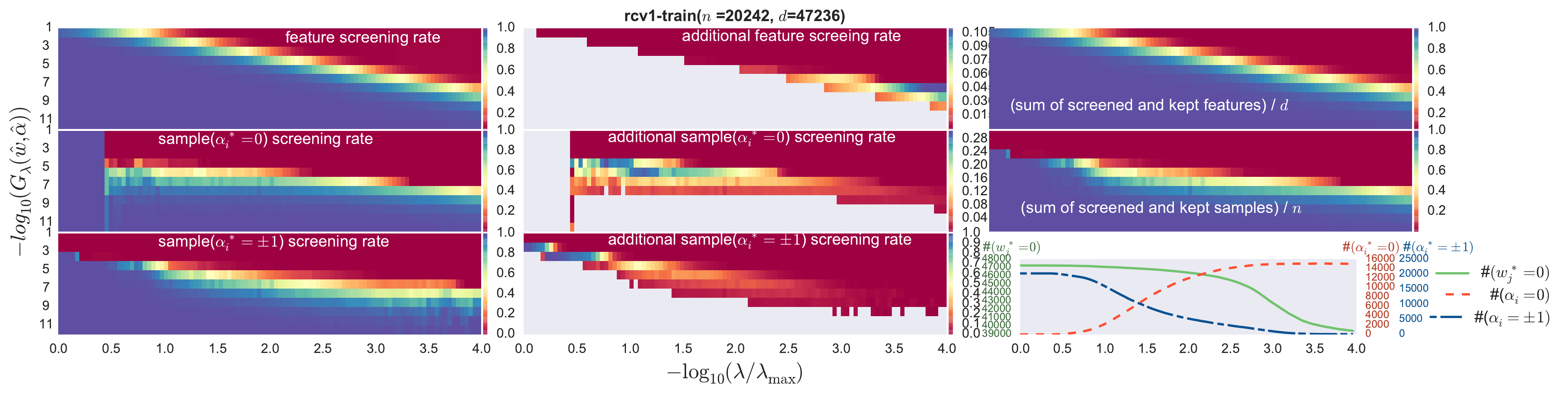}
   \includegraphics[clip,width=17cm]{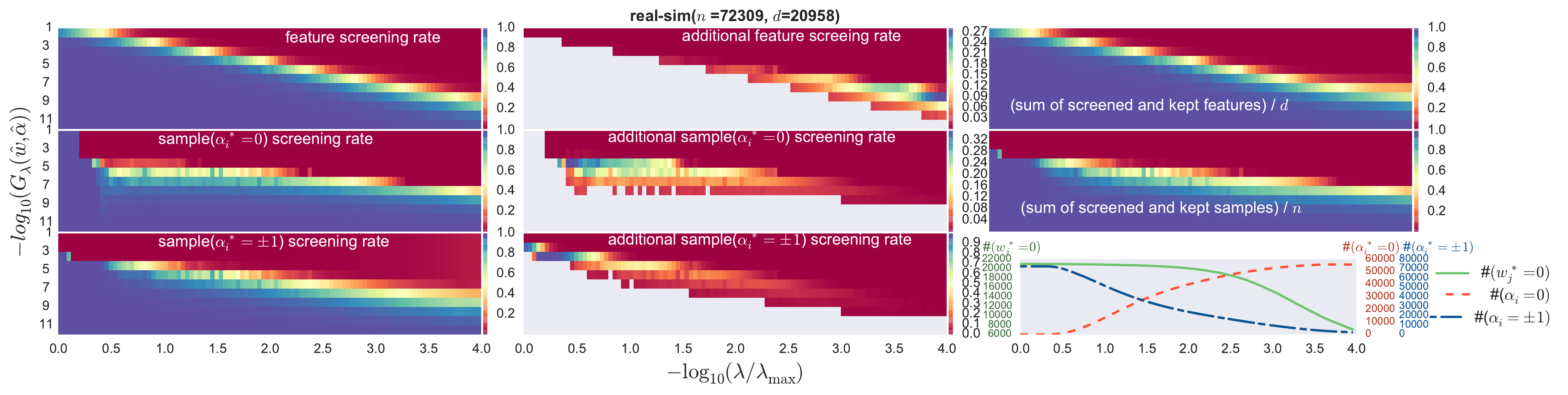}
   \caption{Safe screening and keeping rates in regression problems. See the caption in \figurename~\ref{fig:screeing_rate_smoothed_hinge}.}
   \label{fig:screeing_rate_smoothed_insensitive}
  \end{center}
\end{figure}

\begin{figure}[t]
  \begin{center}
   \subfigure{\includegraphics[clip,width=4cm]{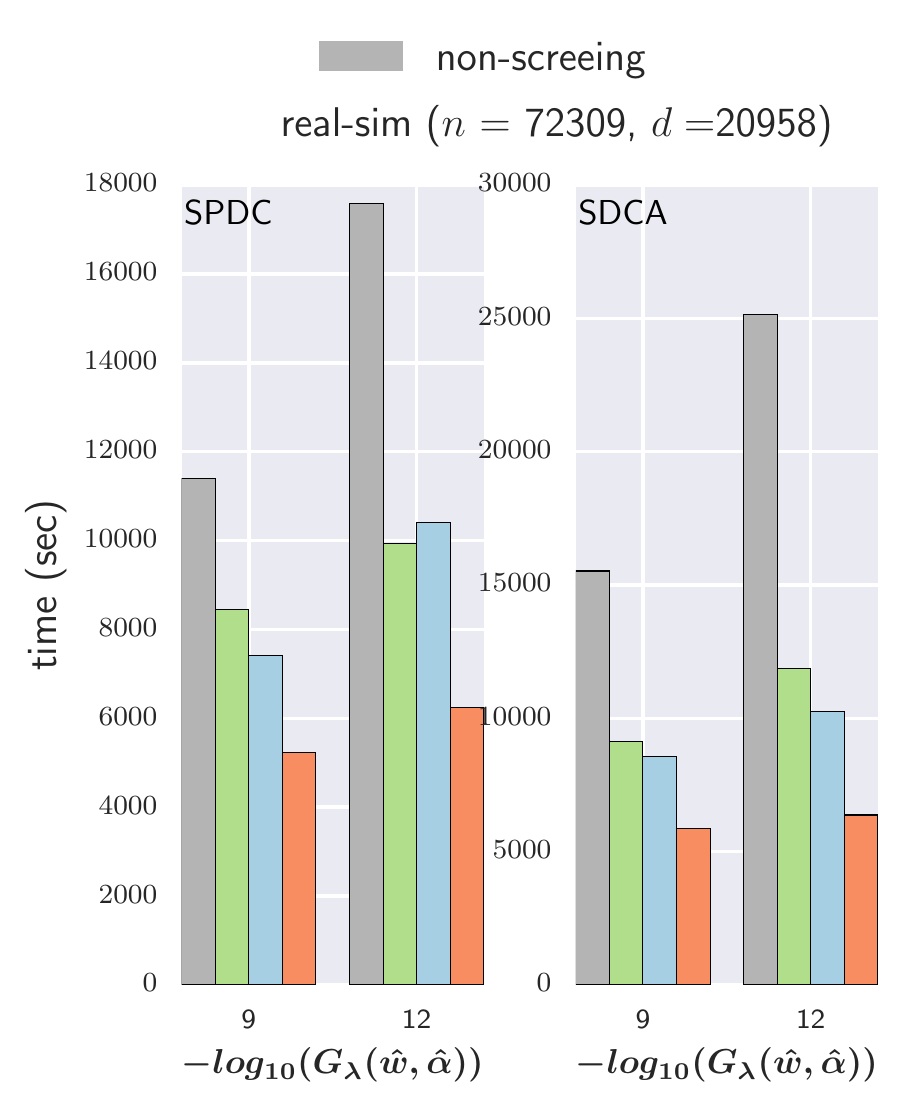}}
   \subfigure{\includegraphics[clip,width=3.8cm]{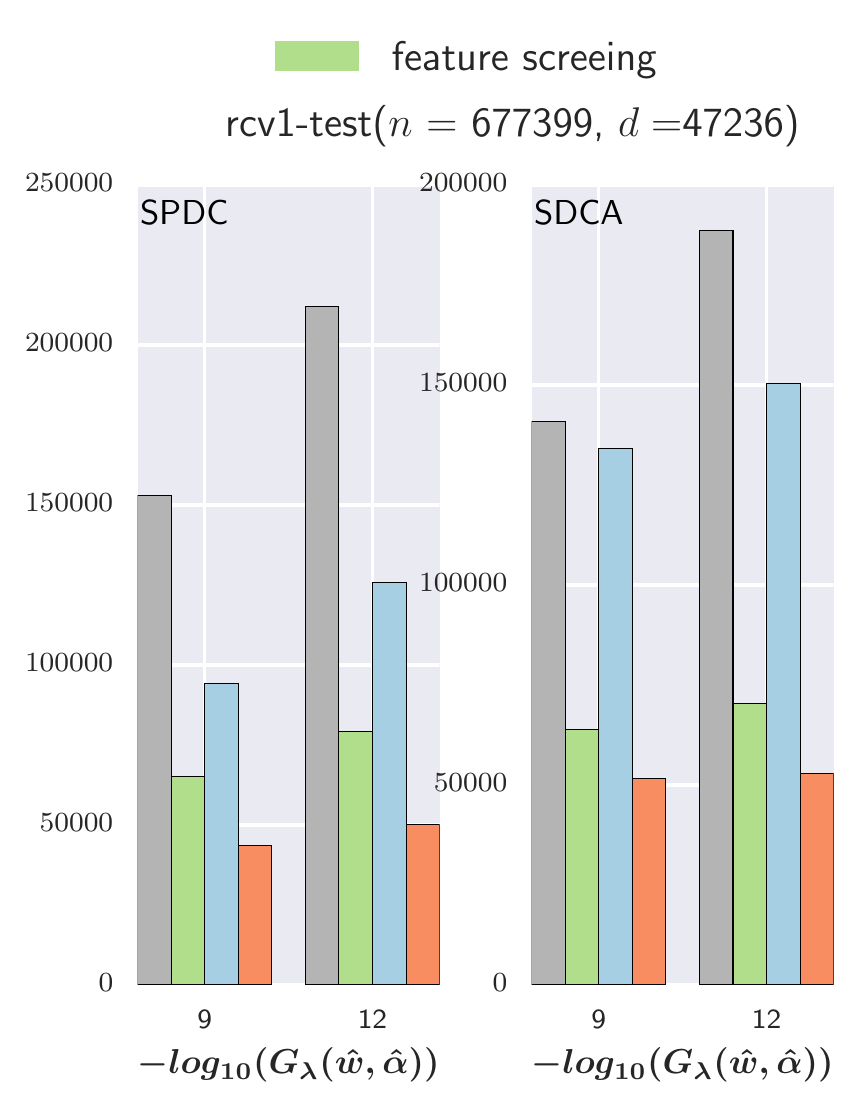}}
   \subfigure{\includegraphics[clip,width=3.8cm]{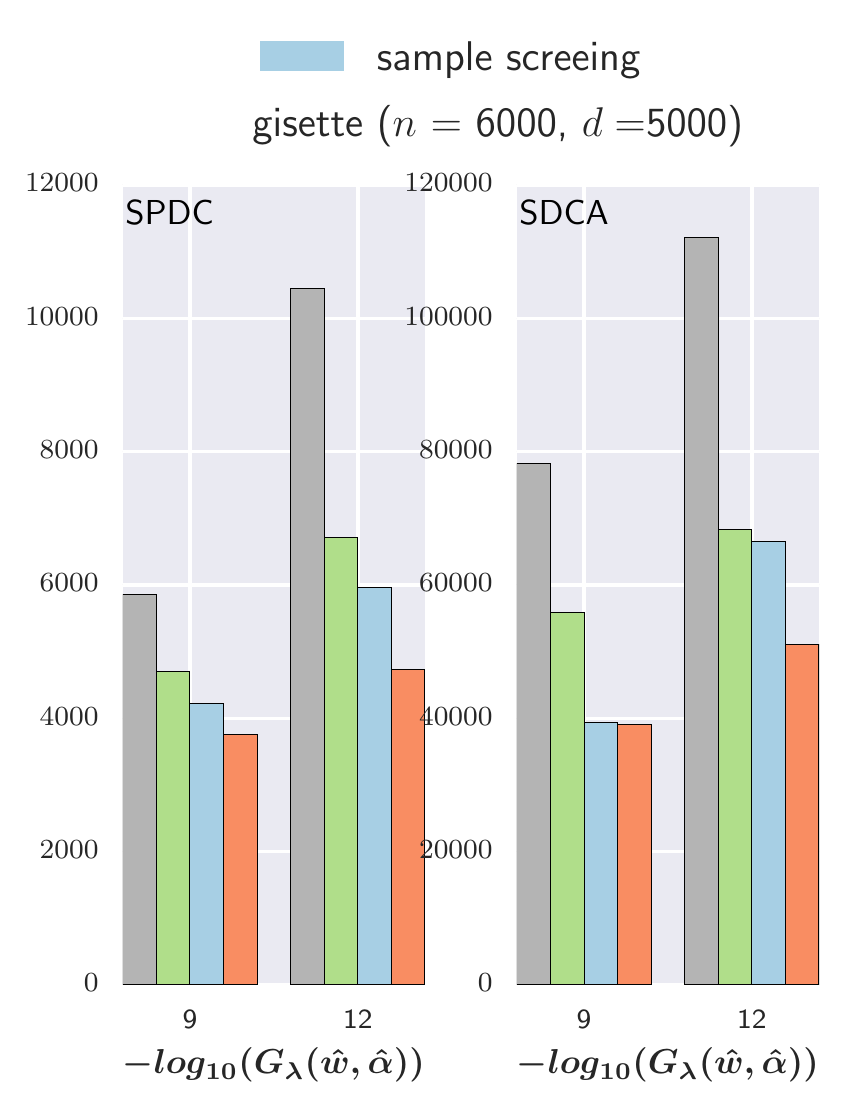}}
   \subfigure{\includegraphics[clip,width=3.85cm]{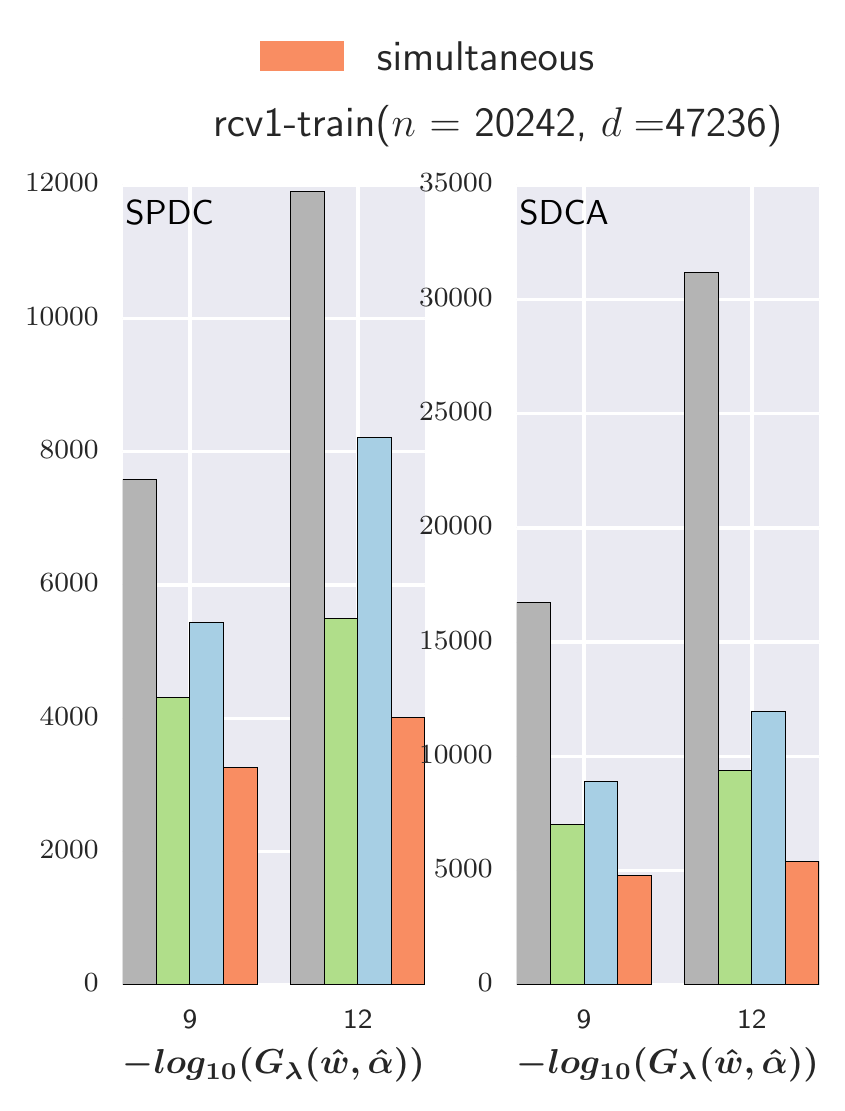}}
   \caption{Total computation time for training 100 solutions for various values of $\lambda$ in classification problems.}
    \label{fig:acc_time_classification}
  \end{center}
\end{figure}

\begin{figure}[t]
  \begin{center}
   \subfigure{\includegraphics[clip,width=4.0cm]{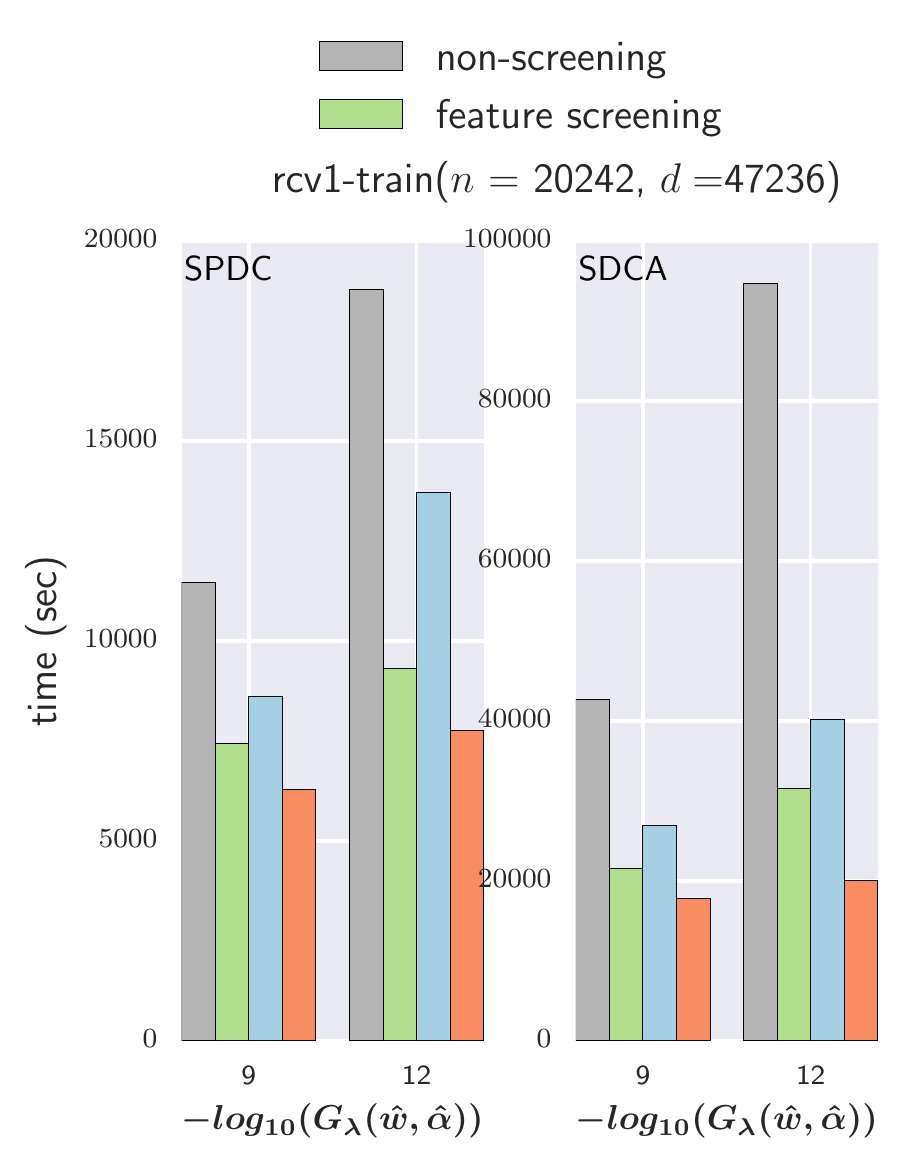}}
   \subfigure{\includegraphics[clip,width=4.0cm]{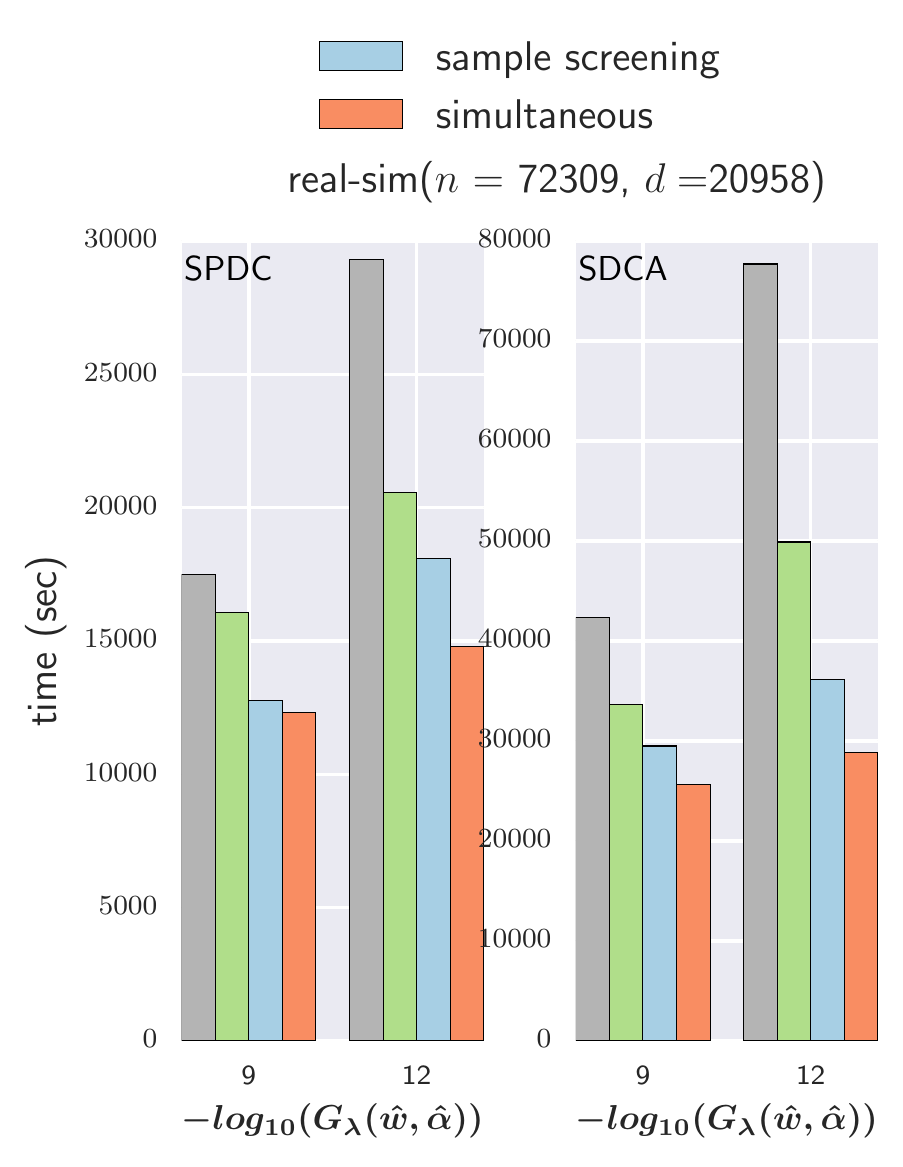}}
   \caption{Total computation time for training 100 solutions for various values of $\lambda$ in regression problems.}
    \label{fig:acc_time_regression}
  \end{center}
\end{figure}

\subsection{Experimental setups}
\label{subsec:experimental-setup}
Table \ref{tab:datasets} summarizes the datasets used in the experiments.
We picked up four datasets whose numbers of features and samples are both large from libsvm dataset repository
\cite{chang2011libsvm}.
These are all classification datasets. For regression experiments, we just used the label indicator variables as scalar response variables.

\begin{table}[h]
 \vspace*{-5mm}
 \begin{center}
  \caption{Benchmark datasets used in the experiments. }
  \label{tab:datasets}
  \vspace*{1mm}
  \begin{scriptsize}
  \begin{tabular}{c|r|r|r}
   dataset name& sample size: $n$  & feature size: $d$ & \#(nnz)/$nd$ \\ \hline\hline
   {\tt real-sim}    & 72,309  & 20,958 & 0.002448 \\ \hline
   {\tt rcv1-test}   & 677,399 & 47,236 & 0.025639 \\ \hline
   {\tt gisette}     & 6,000   & 5,000  & 0.991000 \\ \hline
   {\tt rcv1-train}  & 20,242  & 47,236 & 0.001568
  \end{tabular}
   \\
   \#(nnz) indicates the number of non-zero elements.
  \end{scriptsize}
 \end{center}
 \vspace*{-5mm}
\end{table}

Here,
we report the results on
$L_1$-penalized smoothed hinge SV classification
and
$L_1$-penalized smoothed $\veps$-insensitive SV regression.
We set
$\lambda_{\max} := \| Z^\top \bm{1} \|_{\infty} $
for classification and
$\lambda_{\max} := \| X^\top \bm{1} \|_{\infty} $
for regression,
and considered problems with various values of the penalty parameter $\lambda$
between
$\lambda_{\max}$
and
$10^{-4} \lambda_{\max}$.
The parameter in the smoothed hinge loss $\gamma$ is set to be $0.5$.
Also, the parameters in the smoothed $\veps$-insensitive loss
$\gamma$ is set to be $0.1$ and $\veps$ is set to be $0.5$.
The proposed methods can be used with any optimization solvers
as long as
they provide both primal and dual sequences of solutions that converge to the optimal solution.
For the experiments,
we used
Proximal Stochastic Dual Coordinate Ascent (SDCA) \cite{shalev2015accelerated}
and
Stochastic Primal-Dual Coordinate (SPDC) \cite{Zhang2015spdc}
because they are state-of-the-art optimization methods
for general large-scale regularized empirical risk minimization problems.
We wrote the code in C++
along with Eigen library for some numerical computations.
The code is provided as a supplementary material, and will be put in public domain after the paper is accepted.
All the computations were conducted by using a single core of an Intel Xeon CPU E5-2643 v2 (3.50GHz), 64GB MEM.

\subsection{Safe screening and keeping rates}
\label{subsec:safe-screening-keeping-rate}
For demonstrating the synergy effect of simultaneous safe screening,
we compared the simultaneous screening rates with individual safe screening rates.
%
%
{\figurename}s
\ref{fig:screeing_rate_smoothed_hinge}
and
\ref{fig:screeing_rate_smoothed_insensitive}
shows the results
on classification and regression problems,respectively.
In the $3 \times 3$ subplots,
the letf plots indicate
the individual screening rates defined as
(\#(screened features or samples) / \#($w^*_j  = 0$ or $\alpha^*_i = 0$ or $\alpha^*_i = \pm 1$)).
The center plots indicate the additional screening rates by the synergy effect
defined as
(\#(additionally screened features or samples) / \#($w^*_j  = 0$ or $\alpha^*_i = 0$ or $\alpha^*_i = \pm 1$)).
The top plots represent the results on feature screening,
while the middle and the bottom plots
show the results on sample screening
(for each of $\alpha^*_i = 0$ and $\alpha_i^*=\pm 1$).
We investigated the screening rates
for various values of $\lambda$
in the horizontal axis
and
for various quality of the solutions measured in terms of the duality gap in the vertical axis.
In all the datasets,
we observed that it is valuable to consider both feature and sample screening
when the numbers of features and samples are both large.
In addition,
we confirmed that there are improvements in screening rates by the synergy effect both in feature and sample screenings
especially when duality gap $G_{\lambda}(\hat{w}, \hat{\alpha}) $ is large.
Note that gray areas in the center plots corresponds to the blue area in the corresponding left plot,
where the individual safe screening performances are good enough (screening rate $> 0.95$) and additional screening is unnecessary.

The top left and the middle left plots show
the rates of features and samples,
respectively,
that are determined to be active or non-active
by using safe keeping and safe screening approaches,
respectively.
We see that,
by combining safe keeping and safe screening approaches,
a large portion of features/samples can be determined to be active or non-active
without actually solving the optimization problems.
The bottom right plot shows the number of non-active features/samples for various values of $\lambda$.
\label{subsec:computation-time-savings}

\subsection{Computation time savings}
We compared
the computational costs of
simultaneous safe screening
and individual safe feature/sample screening
with the naive baseline (denoted as ``non-screening'').
We compared the computation costs in a realistic model building scenario.
Specifically,
we computed a sequence of solutions at $100$ different penalty parameter values
evenly allocated in
$[10^{-4} \lambda_{\max}, \lambda_{\max} ] $
in the logarithmic scale.
In all the cases,
we used {\it warm-start} approach, i.e.,
when we computed a solution at a new $\lambda$,
we used the solution at the previous $\lambda$ as the initial starting point of the optimizer.
In addition,
whenever possible,
we used
{\it dynamic safe screening} strategies \cite{bonnefoy2014dynamic}
in which safe screening rules are evaluated
every time the duality gap
$G_{\lambda}(\hat{w}, \hat{\alpha})$
was
$0.1$ times smaller than before.
Here,
we exploited the information obtained by safe keeping as well,
i.e.,
we did not evaluate safe screening rules for features and samples which are safely kept as active,
and
the rate of features/samples that are determined to be active or non-active
(see the left top and left middle plots in {\figurename} \ref{fig:screeing_rate_smoothed_hinge})
is used as the stopping criterion for safe feature rule evaluations.
(we terminated safe screening rule evaluations when the rate reaches $0.95$).

{\figurename}s
\ref{fig:acc_time_classification}
and
\ref{fig:acc_time_regression}
show the entire computation time for training $100$ different solutions.
In all the datasets,
simultaneous safe screening was significantly faster than individual safe feature/sample screening and non-screening.
Figure
\ref{fig:per_itr_time_classification_spdc_rcv1_test}
shows a sequence of computation times for various values of $\lambda$
for the classification problem on {\tt rcv1-test} and {\tt real-sim} datasets with SPDC optimization solver.
%
These plots suggest that
the
computation time savings by individual safe feature screening was better than individual safe sample screening when $\lambda$ is large
because
the feature screening rates are high when $\lambda$ is large,
while
the difference between the two individual screening approaches gets smaller
as $\lambda$ gets smaller
(see \figurename~\ref{fig:screeing_rate_smoothed_hinge}).
Simultaneous safe screening
was
consistently
faster than individual safe feature/sample screening and non-screening
in all the problem setups.

\begin{figure}[!h]
  \begin{center}
   \subfigure{\includegraphics[clip,width=7.0cm]{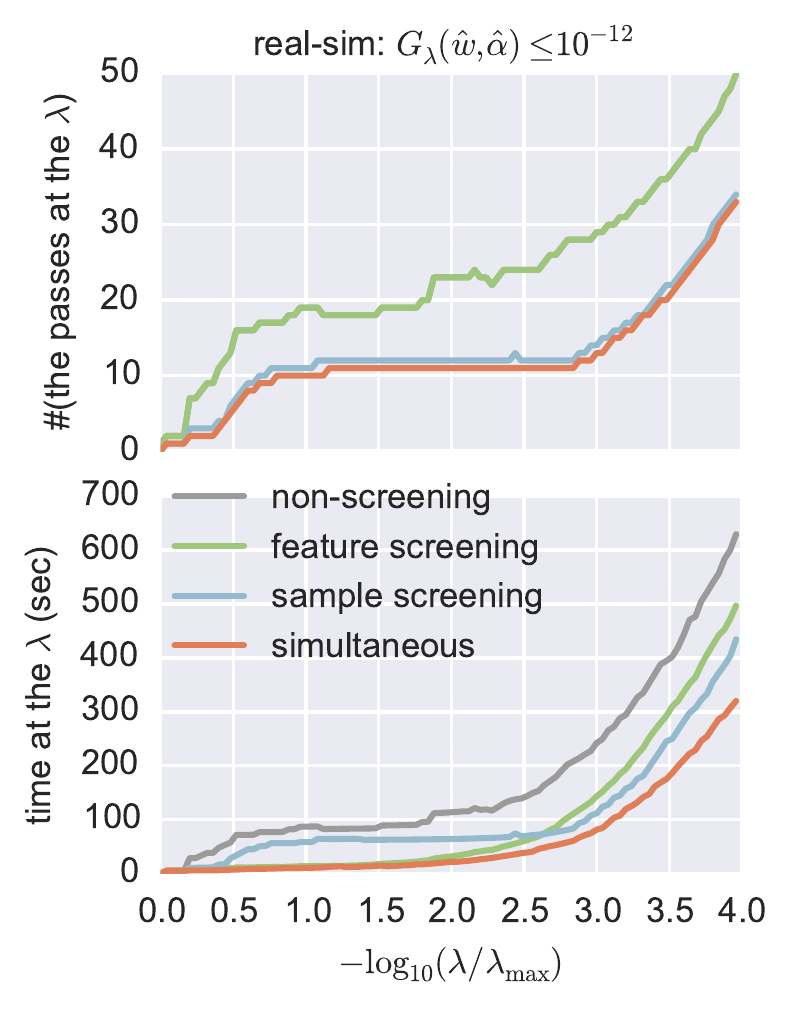}}
   \subfigure{\includegraphics[clip,width=7.0cm]{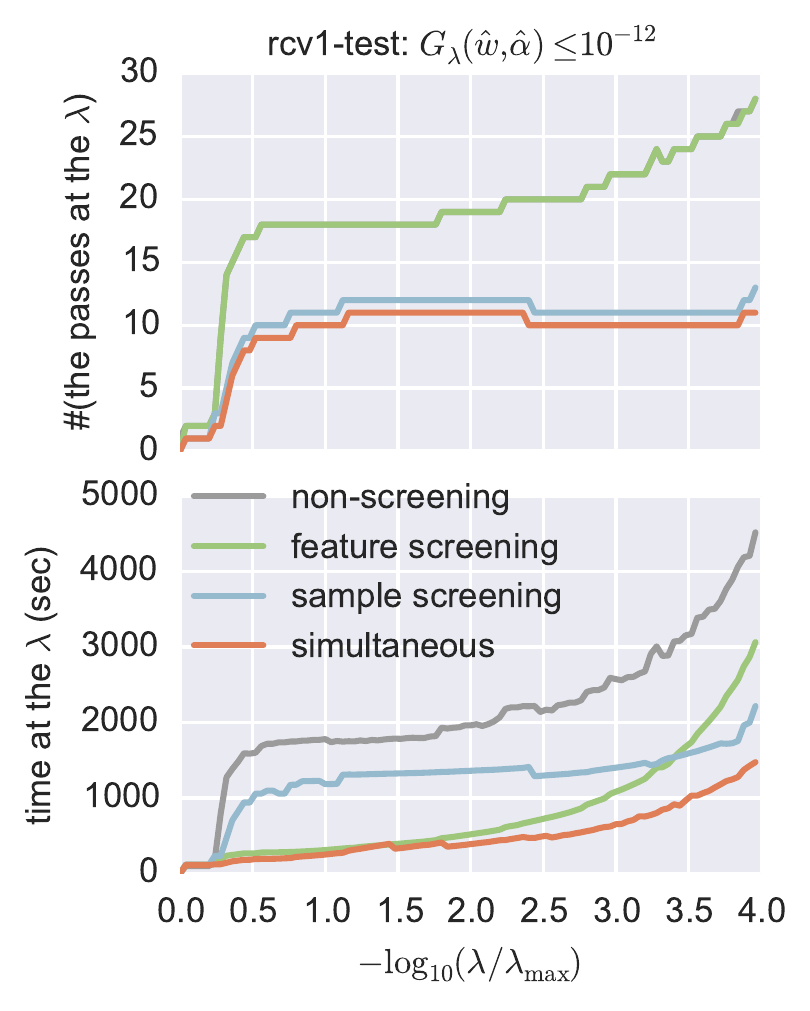}}
   \subfigure{\includegraphics[clip,width=7.0cm]{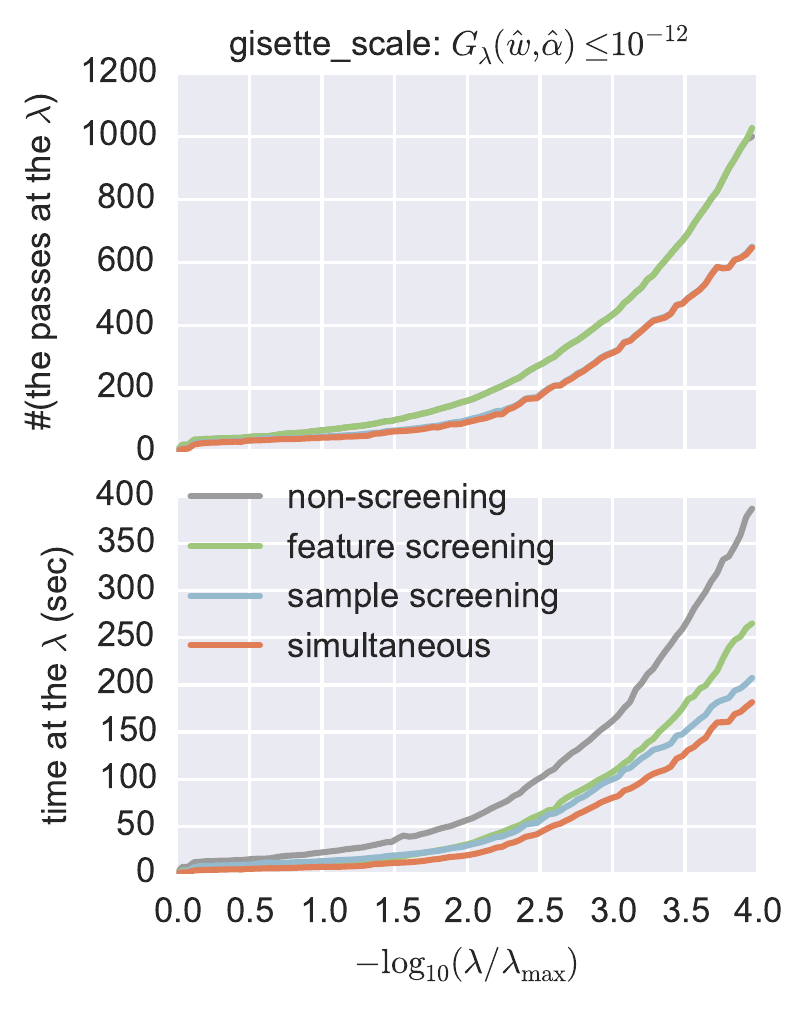}}
   \subfigure{\includegraphics[clip,width=7.0cm]{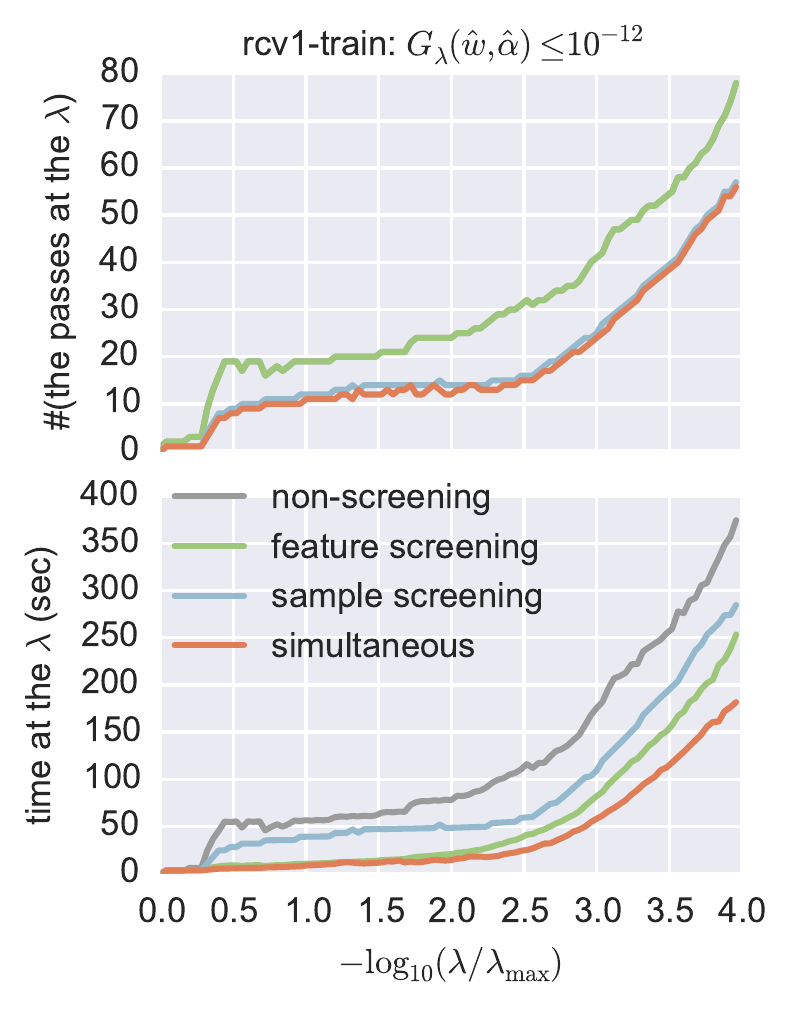}}
   \caption{
   Number of optimization steps and computation time.
   Sequences of the number of passes through the entire dataset and
   computation time
   to convergence
   for various values of $\lambda$ for classification problems with SPDC solver are plotted.}
   \label{fig:per_itr_time_classification_spdc_rcv1_test}
  \end{center}
\end{figure}

 \vspace*{-0.3cm}
\section{Conclusions}
\label{sec:conclusions}
We introduced a new approach
for safely screening out
features
and
samples
simultaneously.
We showed that alternatively iterating feature and sample screening steps has synergetic advantages
in that screening performances can be radily improved in steps.
We also introduced a new approach for predicting active set without false positive error,
which we called
\emph{safe keeping}.
Intensive numerical experiments demonstrated the advantage of our approaches
in classification and regression problems
with large numbers of features and samples.

\clearpage

\appendix

\section{Proofs}
\label{app:proofs}

\subsection{Proof of Lemma 1}
\label{subsec:proof_of_Lemma1}
\begin{proof}
Since
$P_{\lambda}(w) $
is $\lambda$-strongly convex,
$\forall w_1, w_2 \in {\rm dom} P_{\lambda}$,
\begin{align*}
  & P_{\lambda} (w_1) \ge P_{\lambda} (w_2) +  g_{P_{\lambda}}(w_2)^\top (w_1 - w_2) + \frac{\lambda}{2} \| w_1 - w_2 \|^2_2,
\end{align*}
where, $g_{P_{\lambda}}(w) \in \partial P_{\lambda} (w) $.
On the other hand,
$\forall \hat{w} \in {\rm dom} P_{\lambda} ,$
$ g_{P_{\lambda}}(w^* )^\top (\hat{w} - w^*) \ge 0$
(see Proposition B.24 in \cite{Bertsekas99a}).
Also, from weak duality,
$\forall \hat{\alpha} \in {\rm dom} D_{\lambda},$
$ D(\hat{\alpha}) \le P_{\lambda}(w^*)$.
By substituting $w_1 = \hat{w}, w_2 = w^*$,
\begin{align*}
  & \frac{\lambda}{2} \| \hat{w} - w^* \|^2_2 \le P_{\lambda} (\hat{w}) - D_{\lambda} (\hat{\alpha}).
\end{align*}
Therefore,
$w^*$
is
within a
region $\Theta_{w^*}$,
where
\begin{align*}
  \Theta_{w^*} :=
  \set{ w |
  \| \hat{w} - w \|_2 \le
  \sqrt{2 G_{\lambda}(\hat{w}, \hat{\alpha}) / \lambda} } .
\end{align*}

Since $\Theta_{w^*}$ is Sphere,
a lower bound of $x_i^\top w^*$ and
an upper bound of $x_i^\top w^*$ are
given in closed form as follows:
\begin{align*}
LB(x_i^\top w^*) &= x_i^\top \hat{w} -
\|x_i\|_2 \sqrt{2 G_{\lambda}(\hat{w}, \hat{\alpha}) / \lambda} , \\
UB(x_i^\top w^*) &= x_i^\top \hat{w} +
\|x_i\|_2 \sqrt{2 G_{\lambda}(\hat{w}, \hat{\alpha}) / \lambda } .
\end{align*}
\end{proof}

\subsection{Proof of Theorem 2}
\label{subsec:proof_of_Theorem2}
\begin{proof}
 Supposing that
 the result of the previous safe sample screening step assures the optimal values
 $\alpha_i^*$
 for
 a subset of the samples
 $i \in \cS \subset [n]$,
 the
 the dual optimal solution region
 is written as
 \begin{align*}
  \tilde{\Theta}_{\alpha^*} := \set{ \alpha \in \Theta_{\alpha^*} | \alpha_i = \alpha^*_i ~ \forall i \in \cS }.
 \end{align*}
Then,
$X_{:j}^\top \alpha^*$
is bounded from above by the following upper bound:
\begin{align*}
  \tilde{UB}(X_{:j}^\top \alpha^*  )
   &:= \max_{\alpha \in \tilde{\Theta}_{\alpha^*}} X_{:j}^\top \alpha  \\
  &= \sum_{i \in \cS} \alpha^*_i X_{ij}  +
   \max_{\alpha_{\cU_s} }   X_{:j \cU_s} ^\top \alpha_{\cU_s}
  {\rm  ~~s.t ~ }  \| \hat{\alpha}_{\cU_s} - \alpha_{\cU_s} \|^2_2 \le r_D^2 -\| \hat{\alpha}_{\cS} - \alpha^*_{\cS} \|^2_2 \\
  &=\! \sum_{i \in \cS} \!\! \alpha^*_i X_{ij} \! + \!
    X_{:j \cU_s} ^\top \hat{\alpha}_{\cU_s} \! + \!
    \|X_{:j \cU_s}\|_2 \sqrt{r_D^2 -\| \hat{\alpha}_{\cS} - \alpha^*_{\cS} \|^2_2} \\
  &= X_{:j} ^\top \tilde{\alpha}  + \|X_{:j \cU_s}\|_2 \sqrt{r_D^2 -\| \hat{\alpha}_{\cS} - \alpha^*_{\cS} \|^2_2}.
\end{align*}
Similarly, $X_{:j}^\top \alpha^*$ is bounded from below by the following lower bound:
\begin{align*}
 \!\! \tilde{LB}(X_{:j}^\top \! \alpha ) \!\!
   := X_{:j} ^\top \tilde{\alpha}  - \|X_{:j \cU_s}\|_2 \sqrt{r_D^2 -\| \hat{\alpha}_{\cS} - \alpha^*_{\cS} \|^2_2}.
\end{align*}
Therefore,
\begin{align*}
  \tilde{UB}(| X_{:j}^\top \alpha | )
  = | X_{:j} ^\top \tilde{\alpha} | + \|X_{:j \cU_s}\|_2 \sqrt{r_D^2 -\| \hat{\alpha}_{\cS} - \alpha^*_{\cS} \|^2_2}.
\end{align*}

Since $\tilde{\Theta}_{\alpha^*} \subset {\Theta}_{\alpha^*}$,
the upper bound
in \eq{eq:new-UB-for-feature-screening}
is tighter than or equal to
that
in \eq{eq:UB-for-feature-screening},
i.e.,
$\tilde{UB}(|X_{:j}^\top \alpha^*|) \le UB(|X_{:j}^\top \alpha^*|)$.
\end{proof}

\subsection{Proof of Theorem 3}
\label{subsec:proof_of_Theorem3}
\begin{proof}
 Supposing that
 the result of the previous safe feature screening step assures that
 $w_j^* = 0$
 for
 a subset of the features
 $j \in \cF \subset [d]$,
 the
 the primal optimal solution region
 is written as
 \begin{align*}
  \tilde{\Theta}_{w^*} := \set{w \in \Theta_{w^*} | w_j = 0 ~\forall j \in \cF }.
 \end{align*}
 Then,
 $x_i^\top w^*$
 is bounded from below by the following lower bound:
  \begin{align*}
   \tilde{LB}(x_i^\top w)
   &:=
   \min_{ w \in \tilde{\Theta}_{w^*} } x_i^\top w \\
   &= \min_{ w } x_i^\top w
    {\rm ~~s.t.~~} \| \hat{w} - w \|_2^2 \le r_P^2 ,
    \hat{w}_j = 0 ~ \forall j \in \cF \\
    & = \min_{ w } x_{i \cU_f}^\top w_{\cU_f}
     {\rm ~~s.t.~~} \| \hat{w}_{\cU_f} - w_{\cU_f} \|_2^2
     \le r_P^2 - \| \hat{w}_\cF \|_2^2 \\
    &= x_{i \cU_f}^\top \hat{w}_{\cU_f} -
    \|x_{i \cU_f}\|_2 \sqrt{ r_P^2 - \| \hat{w}_\cF \|_2^2}.
  \end{align*}

Similarly,
 $x_i^\top w^*$
 is bounded from above by the following upper bound:
\begin{align*}
UB(x_i^\top w^*)  = x_{i \cU_f}^\top \hat{w}_{\cU_f} +
    \|x_{i \cU_f}\|_2 \sqrt{ r_P^2 - \| \hat{w}_\cF \|_2^2}.
\end{align*}

Since
$\tilde{\Theta}_{w^*} \subset {\Theta}_{w^*}$,
these bounds
in
\eq{eq:new-bounds-for-sample-screening}
are tighter than or equal to
those
in \eq{eq:bounds-for-sample-screening},
i.e.,
$\tilde{LB}(x_i^\top w^*) \ge LB(x_i^\top w^*)$
and
$\tilde{UB}(x_i^\top w^*) \le UB(x_i^\top w^*)$.
\end{proof}



\subsection{Proof of Theorem 6}
\label{subsec:proof_of_Theorem6}
We first construct
the
the dual optimal solution region
$\tilde{\Theta}_{\alpha^*}$.
\begin{lemm}
 \label{lemm:l1l1_dual_region}
 For an arbitrary pair of primal feasible solution
 $\hat{w} \in {\rm dom}P_\lambda$
 and dual feasible solution
 $\hat{\alpha} \in {\rm dom}D_\lambda$,
 the dual optimal solution region
 is written as
 \begin{align*}
  &\Theta_{\alpha^*} :=
  \Set{ \forall i ~ y_i \alpha_i \in [0, 1]  |
  y^\top \hat{\alpha} \le
  y^\top \alpha \le
  P_{\lambda}(\hat{w}) }.
 \end{align*}
\end{lemm}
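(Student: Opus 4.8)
The plan is to verify that the dual optimal solution $\alpha^*$ satisfies each of the three constraints defining $\Theta_{\alpha^*}$, namely the box constraints $y_i \alpha_i^* \in [0,1]$ for every $i \in [n]$, the lower bound $y^\top \hat{\alpha} \le y^\top \alpha^*$, and the upper bound $y^\top \alpha^* \le P_\lambda(\hat{w})$. Each of these comes from a different elementary property: dual feasibility, optimality, and weak duality, respectively.

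First I would record that on the feasible region the dual objective simplifies. Since $\psi^*(v) = 0$ whenever $\|v\|_\infty \le 1$ by \eq{eq:l1reg_conj}, and $\ell_i^*(-\alpha_i) = -y_i \alpha_i$ whenever $y_i \alpha_i \in [0,1]$ by \eq{eq:hinge_conj}, the dual objective is $D_\lambda(\alpha) = y^\top \alpha$ on the feasible set, where the remaining restrictions $\|\frac{1}{\lambda n} X^\top \alpha\|_\infty \le 1$ and $y_i \alpha_i \in [0,1]$ are inherited from ${\rm dom}\, \psi^*$ and ${\rm dom}\, \ell_i^*$. The box constraints $y_i \alpha_i^* \in [0,1]$ are then immediate: $\alpha^*$ is dual optimal and hence dual feasible, and membership in ${\rm dom}\, D_\lambda$ explicitly requires $y_i \alpha_i \in [0,1]$.

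The lower bound follows from optimality of $\alpha^*$. Since $\hat{\alpha} \in {\rm dom}\, D_\lambda$ is dual feasible and $\alpha^*$ maximizes $D_\lambda$, we have $y^\top \hat{\alpha} = D_\lambda(\hat{\alpha}) \le D_\lambda(\alpha^*) = y^\top \alpha^*$. The upper bound follows from weak duality applied to the primal feasible $\hat{w}$ and the dual feasible $\alpha^*$: $D_\lambda(\alpha^*) \le P_\lambda(\hat{w})$, i.e. $y^\top \alpha^* \le P_\lambda(\hat{w})$. Combining the box constraints with these two inequalities yields $\alpha^* \in \Theta_{\alpha^*}$.

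There is no substantial obstacle here; the only point requiring care is the first step of confirming that the dual objective reduces exactly to $y^\top \alpha$ over the feasible set, so that the optimality and weak-duality inequalities read precisely as the stated bounds on $y^\top \alpha^*$. Once that reduction is in hand, the argument is merely an assembly of feasibility, optimality, and weak duality, each supplying one of the three defining inequalities of the region.
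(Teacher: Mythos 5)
Your proof is correct and matches the paper's own argument: the paper likewise obtains $y^\top \hat{\alpha} \le y^\top \alpha^*$ from optimality, $y^\top \alpha^* \le P_\lambda(\hat{w})$ from weak duality, and then notes that dual feasibility (which supplies the box constraints $y_i \alpha_i \in [0,1]$) places $\alpha^*$ inside $\Theta_{\alpha^*}$. Your preliminary check that $D_\lambda(\alpha)$ reduces to $y^\top \alpha$ on the dual feasible set is simply an explicit verification of what the paper takes for granted from its statement of the LP-based SVM dual.
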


\begin{proof}[Proof of Lemma \ref{lemm:l1l1_dual_region} ]
 From the optimality and weak duality
 $y^\top \hat{\alpha} \le y^\top \alpha^*$
 and
  $ y^\top \alpha^*  \le P_{\lambda} (\hat{w}) $,
 respectively.
 Therefore,
 \begin{align*}
  &\alpha^* \in \hat{\Theta}_{\alpha^*} :=
  \Set{\alpha \in {\rm dom} D_{\lambda} |
   y^\top \hat{\alpha} \le y^\top \alpha \le P_{\lambda}(\hat{w}) }.
 \end{align*}
 Noting that
 $\hat{\Theta}_{\alpha^*} \subseteq \Theta_{\alpha^*}$,
 $\alpha^*  \in \Theta_{\alpha^*}$.
\end{proof}

\begin{proof}[Proof of Theorem \ref{theo:l1l1_feature_screen}]
 From Lemma \ref{lemm:l1l1_dual_region},
 \begin{align*}
  X_{:j}^\top \alpha^* \ge LB(X_{:j}^\top \alpha^*) := \min_{\alpha \in \Theta_{\alpha^*}} X_{:j}^\top \alpha
 \end{align*}
  Moreover,
 \begin{align*}
  LB(X_{:j}^\top \alpha^*) = \min_{\alpha \in \Theta_{\alpha^*}} Z_{:j}^\top \alpha_{y},
 \end{align*}
  where $\alpha_{y} $
  $:= [y_1 \alpha_1, \ldots, y_n \alpha_n]^\top$.
 Let us define three $n$-dimensional vectors
 $\bar{\alpha}^{(1)}$,
 $\bar{\alpha}^{(2)}$
 and
 $\bar{\alpha}^{(3)}$
 as follows:
 \begin{align*}
  \bar{\alpha}^{(1)}_i &:= \mycase{
  y_i & (Z^\prime_{ij} < 0) \\
  0 & (\text{otherwise}),
  }
  \\
  \bar{\alpha}^{(2)}_i &:= \mycase{
  y_i & (Z^\prime_{ij} \le Z^\prime_{l_q j}) \\
  y_i (y^\top \hat{\alpha} - l_q)  & (Z^\prime_{ij} = Z^\prime_{(l_{q}+1) j}) \\
  0 & (otherwise),
  }
  \\
  \bar{\alpha}^{(3)}_i &:= \mycase{
  y_i & (Z^\prime_{ij} \le Z^\prime_{u_q j}) \\
  y_i (P_\lambda(\hat{w}) - u_q)  & (Z^\prime_{ij} = Z^\prime_{(u_{q}+1) j}) \\
  0 & (otherwise).
  }
 \end{align*}
 If
 $l_q + 1 \le n_{Z^\prime_{: j}} \le u_q$,
 then
 $\bar{\alpha}^{(1)}$
 is an element of
 $\Theta_{\alpha^*}$
 and minimizes $X_{:j}^\top \alpha$.
 If
 $n_{Z'_{:j}} < l_q+1$
 then
 $\bar{\alpha}^{(1)}  \not \in \Theta_{\alpha^*}$,
 $\bar{\alpha}^{(2)} $
 is an element of
 $\Theta_{\alpha^*}$
 and
 minimizes $X_{:j}^\top \alpha$
 because
 $y^\top \bar{\alpha}^{(2)} = y^\top \hat{\alpha}$.
 If
 $ m_{Z'_{:j}} > u_q$
 then
 $\bar{\alpha}^{(1)}  \not \in \Theta_{\alpha^*}$,
 meaning that
 $\bar{\alpha}^{(3)}$
 is an element of
 $\Theta_{\alpha^*}$
 and
 minimizes $X_{:j}^\top \alpha$
 because
 $ y^\top \bar{\alpha}^{(3)} = P_{\lambda}$.
 Therefore,
 \begin{align*}
    LB (X_{:j}^\top \alpha^* ) :=
    \begin{cases}
    \sum_{i=1}^{l_q}
    Z'_{ij} + (y^\top \hat{\alpha} - l_q) Z'_{(l_q+1) j}
    & (n_{Z'_{:j}} < l_q + 1), \\
    \sum_{i=1}^{u_q} Z'_{ij} +
    (P_{\lambda}(\hat{w}) - u_q),
     Z'_{u_q j}
    & (n_{Z'_{:j}} > u_q ) \\
    \sum_{i=1}^{n} \min \{0, Z'_{ij} \}
    & ({\rm otherwise}),
    \end{cases}
 \end{align*}
 Similarly,
 from Lemma \ref{lemm:l1l1_dual_region},
 \begin{align*}
  X_{:j}^\top \alpha^* \le UB(X_{:j}^\top \alpha^*) := \max_{\alpha \in \Theta_{\alpha^*}} X_{:j}^\top \alpha
 \end{align*}
 Moreover,
 \begin{align*}
  UB(X_{:j}^\top \alpha^*) = \max_{\alpha \in \Theta_{\alpha^*}} Z_{:j}^\top \alpha_{y}.
 \end{align*}
 Let us define three $n$-dimensional vectors
 $\bar{\alpha}^{(4)}$,
 $\bar{\alpha}^{(5)}$
 and
 $\bar{\alpha}^{(6)}$
 as follows:
 \begin{align*}
  \bar{\alpha}^{(4)}_i &:= \mycase{
  y_i & (Z^\prime_{ij} > 0) \\
  0 & (\text{otherwise}),
  }
  \\
  \bar{\alpha}^{(5)}_i &:= \mycase{
  y_i & (Z^\prime_{ij} \ge Z^\prime_{(n -l_q) j}) \\
  y_i (y^\top \hat{\alpha} - l_q)  & (Z^\prime_{ij} = Z^\prime_{(n-l_{q}-1) j}) \\
  0 & (otherwise),
  }
  \\
  \bar{\alpha}^{(6)}_i &:= \mycase{
  y_i & (Z^\prime_{ij} \ge Z^\prime_{(n-u_q) j}) \\
  y_i (P_\lambda(\hat{w}) - u_q)  & (Z^\prime_{ij} = Z^\prime_{(n -u_{q}-1) j}) \\
  0 & (otherwise).
  }
 \end{align*}
 If
 $l_q + 1 \le p_{Z'_{:j}} \le u_q $,
 then
 $\bar{\alpha}^{(4)}$
 is an element of
 $\Theta_{\alpha^*}$
 and maximizes $X_{:j}^\top \alpha$.
 If
 $ p_{Z'_{:j}} < l_q+1$
 then
 $\bar{\alpha}^{(4)}  \not \in \Theta_{\alpha^*}$,
 $\bar{\alpha}^{(5)} $
 is an element of
 $\Theta_{\alpha^*}$
 and
 maximizes $X_{:j}^\top \alpha$
 because
 $y^\top \bar{\alpha}^{(5)} = y^\top \hat{\alpha}$.
 If
 $ p_{Z'_{:j}} > u_q$
 then
 $\bar{\alpha}^{(4)}  \not \in \Theta_{\alpha^*}$,
 meaning that
 $\bar{\alpha}^{(6)}$
 is an element of
 $\Theta_{\alpha^*}$
 and
 maximizes $X_{:j}^\top \alpha$
 because
 $ y^\top \bar{\alpha}^{(6)} = P_{\lambda}$.
 Therefore,
   \begin{align*}
    UB (X_{:j}^\top \alpha^* ) :=
    \begin{cases}
    \sum_{i=n - l_q}^{n}
    Z'_{ij} + (y^\top \hat{\alpha} - l_q) Z'_{(n-l_q-1) j}
    & \!\!\!\!\!\!\!\! (p_{Z'_{:j}} < l_q + 1), \\
    \sum_{i=n-u_q}^{n} Z'_{ij} +
    (P_{\lambda}(\hat{w}) - u_q)
     Z'_{(n - u_q -1) j}
    & \!\!\!\! (p_{Z'_{:j}} > u_q ), \\
    \sum_{i=1}^{n} \max \{0, Z'_{ij} \}
    & \!\!\!\! ({\rm otherwise}).
    \end{cases}
  \end{align*}

  On the other hand,
  from KKT condition(\ref{eq:kkt1}),
  \begin{align}
    \label{eq:kkt1_l1reg}
    \frac{1}{\lambda n} X_{:j}^\top \alpha^* \in \begin{cases}
      \frac{w^*_j}{|w^*_j|} & (w^*_j \neq 0) \\
      [-1, 1] & ({\rm otherwise}).
    \end{cases}
  \end{align}
  Therefore,
  if
  $LB (X_{:j}^\top \alpha^* ) < -\lambda n $ and
  $UB (X_{:j}^\top \alpha^* ) > \lambda n $ then
  $w^*_j = 0$.
\end{proof}

\subsection{Proof of Theorem 7}
\label{subsec:proof_of_Theorem7}
First, we construct
the primal optimal solution region
${\Theta_{w^*}}$.
\begin{lemm}
 \label{lemm:l1l1_primal_region}
 The primal optimal solution region
 $\Theta_{w^*}$
 is given
 $\forall \hat{w} \in {\rm dom} P_{\lambda}$
 as
 \begin{align}
  \label{eq:l1l1_primal_region}
  \Theta_{w^*} =
  \Set{ {w} \in {\rm dom} P_{\lambda} |
  \lambda \|w\|_1 + g_{\ell}(\hat{w})^\top w \le k },
 \end{align}
 where
 $g_{\ell}(w) := \frac{1}{n} \sum_{i \in [n]} g_{\ell_i}({w}) $.
\end{lemm}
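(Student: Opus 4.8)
The plan is to avoid the strong-convexity / sphere construction used in Lemma~\ref{lemm:safe-sample-screening}, because for the LP-based SVM the primal objective $P_\lambda(w) = \lambda\|w\|_1 + \frac1n\sum_{i\in[n]}\ell_i(x_i^\top w)$ is \emph{not} strongly convex (the $L_1$ penalty and the vanilla hinge loss are merely convex), so no quadratic lower bound and hence no ball-shaped region is available. Instead I would rely only on plain convexity of the loss together with the optimality of $w^*$. Write $L(w) := \frac1n\sum_{i\in[n]}\ell_i(x_i^\top w)$ for the empirical loss, so that $P_\lambda(w) = \lambda\|w\|_1 + L(w)$ and $g_\ell(\hat w) = \frac1n\sum_{i\in[n]} g_{\ell_i}(\hat w) \in \partial L(\hat w)$.

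First I would apply the subgradient inequality for the convex function $L$ at the feasible point $\hat w$, evaluated at the optimum $w^*$:
\begin{align*}
 L(w^*) \ge L(\hat w) + g_\ell(\hat w)^\top (w^* - \hat w).
\end{align*}
This is the only place where convexity of the hinge loss is used, and the nonsmoothness is harmless: the inequality holds for \emph{any} choice of subgradient $g_{\ell_i}(\hat w)$, so I would simply fix the same subgradient that appears in the definition of $k$.

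Next I would invoke optimality of $w^*$. Since $w^*$ minimizes $P_\lambda$ and $\hat w \in {\rm dom}P_\lambda$ is feasible, $P_\lambda(w^*) \le P_\lambda(\hat w)$, i.e.
\begin{align*}
 \lambda\|w^*\|_1 + L(w^*) \le \lambda\|\hat w\|_1 + L(\hat w).
\end{align*}
Substituting the lower bound on $L(w^*)$ into the left-hand side and cancelling the common (unknown) term $L(\hat w)$ yields
\begin{align*}
 \lambda\|w^*\|_1 + g_\ell(\hat w)^\top w^* \le \lambda\|\hat w\|_1 + g_\ell(\hat w)^\top \hat w = k,
\end{align*}
which is exactly the defining inequality of $\Theta_{w^*}$ in \eq{eq:l1l1_primal_region}; hence $w^* \in \Theta_{w^*}$.

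I do not expect a genuine obstacle: the argument is a one-line combination of the subgradient inequality with the optimality bound $P_\lambda(w^*)\le P_\lambda(\hat w)$, and the cancellation of $L(\hat w)$ is precisely what lets the unknown loss value drop out. The only point requiring care is the bookkeeping of the subgradient (the same $g_{\ell_i}(\hat w)$ must be used in both the subgradient inequality and in $k$) together with the trivial observation that $w^* \in {\rm dom}P_\lambda$. It is worth noting that, unlike the sphere regions \eq{eq:dual_gap_sphere} and \eq{eq:primal_gap_sphere}, the resulting $\Theta_{w^*}$ is a convex region bounded by the nonsmooth constraint $\lambda\|w\|_1 + g_\ell(\hat w)^\top w \le k$ rather than a ball, which is exactly why the subsequent $LB$/$UB$ evaluation in Theorem~\ref{theo:l1l1_sample_screen} has to be posed as a small linear program rather than admitting a closed-form sphere bound.
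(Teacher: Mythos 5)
Your proof is correct, and it reaches the paper's region by a genuinely different and more elementary route. The paper argues at the first-order level: it invokes the variational-inequality form of optimality (Proposition B.24 of Bertsekas), $(\lambda g_{\psi}(w^*) + g_{\ell}(w^*))^\top (w^* - \hat{w}) \le 0$, replaces $g_{\ell}(w^*)$ by $g_{\ell}(\hat{w})$ via monotonicity of the loss subdifferential (obtained from two subgradient inequalities), and then eliminates the penalty subgradient using the $L_1$-specific identities $g_{\psi}(w^*)^\top w^* = \|w^*\|_1$ and $g_{\psi}(w^*)^\top \hat{w} \le \|\hat{w}\|_1 = g_{\psi}(\hat{w})^\top \hat{w}$. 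You instead argue at the zeroth-order level: the value comparison $P_\lambda(w^*) \le P_\lambda(\hat{w})$ combined with a single subgradient inequality for the empirical loss at $\hat{w}$, after which the loss terms cancel and the defining inequality of $\Theta_{w^*}$, with exactly the paper's constant $k$, falls out. Your route is shorter, dispenses with the variational inequality and with all bookkeeping of $\partial \psi$, and in fact proves the analogous statement verbatim for an arbitrary convex penalty $\psi$ (with $\lambda\psi(w)$ in place of $\lambda\|w\|_1$ and $k = \lambda\psi(\hat{w}) + g_{\ell}(\hat{w})^\top \hat{w}$), whereas the paper's derivation leans on the $L_1$ structure; what the paper's version buys is mainly stylistic consistency with its Proposition-B.24-based proof of Lemma~\ref{lemm:safe-sample-screening}. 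Two minor remarks: the term you cancel, $L(\hat{w})$, is in fact computable --- it is $L(w^*)$ that is unknown and is eliminated by the subgradient lower bound --- and, as you correctly flag, the only point of care is that the subgradients $g_{\ell_i}(\hat{w})$ used in the subgradient inequality must be the same ones fixed in the definition of $k$; with that, your argument is complete.
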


\begin{proof}
From Proposition B.24 in \cite{Bertsekas99a},
\begin{align*}
 (\lambda g_{\psi}(w^*) + g_{\ell}(w^*))^\top ( w^* - \hat{w} ) \le 0 , \forall \hat{w} \in {\rm dom} P_{\lambda},
\end{align*}
where $ g_{\psi}(w) \in \partial \psi (w)$.
Form the convexity of $\ell_i $ for $i \in [n]$ and
the definition of subgradient
 \begin{align*}
  &
  \ell_i(w^*)
  \ge
  \ell_i(\hat{w})
  +
   g_{\ell_i}(\hat{w})
  (w^* - \hat{w}),  \forall \hat{w} \in {\rm dom} P_{\lambda}
  \\
  &
  \ell_i(\hat{w})
  \ge
  \ell_i(w^*)
  +
  g_{\ell_i}(w^*)
  (\hat{w} - w^*),  \forall \hat{w} \in {\rm dom} P_{\lambda},
 \end{align*}
 and thus,
 $
 g_{\ell_i}(w^*)^\top (w^* - \hat{w})
 \ge
 g_{\ell_i}(\hat{w})^\top (w^* - \hat{w}) ,  \forall \hat{w} \in {\rm dom} P_{\lambda}
 $.
 Therefore,
 $\forall \hat{w} \in {\rm dom} P_{\lambda}$,
 \begin{align*}
  \lambda g_{\psi}(w^*)^\top w^*  +
  g_{\ell}(\hat{w})^\top w^*
  \le \lambda g_{\psi}(w^*)^\top \hat{w} + g_{\ell}(\hat{w})^\top {\hat{w}}.
 \end{align*}
 Since
 $
 g_{\psi}(\hat{w} )^\top \hat{w} = \| \hat{w} \|_1
  = \max_{s \in [-1,1]^d} s^\top \hat{w}
 $
 and
 $
  g_{\psi}(w^*) \in [-1,1]^d,
 $
 we have
 \begin{align*}
  \lambda g_{\psi}(w^*)^\top \hat{w} \le
  \lambda g_{\psi}(\hat{w} )^\top \hat{w}.
 \end{align*}
 By combining these results,
 \begin{align*}
  \lambda \| w^* \|_1  + g_{\ell}(\hat{w} )^\top w^*  \le k ,
  ~~ \forall \hat{w} \in {\rm dom} P_{\lambda}.
 \end{align*}
\end{proof}

\begin{proof}[Proof of Theorem \ref{theo:l1l1_sample_screen}]
From Lemma \ref{lemm:l1l1_primal_region},
 \begin{align*}
  x_i^\top w^* \ge LB (x_i^\top w^*)
  := \min_{ w \in \Theta_{w^*}} x_i^\top w.
 \end{align*}
 Using a Lagrange multiplier
 $\mu > 0$,
 \begin{align}
  \label{eq:appA:LB}
  LB (x_i^\top w^*)
  &= \min x_i^\top w  ~~{\rm s.t}~~ w \in \Theta_{w^*} \\
  \nonumber
  &= \min_{w} \max_{\mu > 0}
  \{x_i^\top w + \mu( \lambda \|w\|_1 + g_{\ell}(\hat{w})^\top w - k ) \} \\
  \nonumber
  &= \max_{\mu > 0}  \{ \mu k +
  \min ( \underbrace{x_i^\top w + \mu \lambda \|w\|_1
  + \frac{\mu}{n} g_{\ell}(\hat{w})^\top w }_{L(w)} ) \}
 \end{align}
 Since
 $0 \in \partial L$,
 which is written as
 $\partial L = x_i + \mu \lambda \partial \psi(w) + \frac{\mu}{n} g_{\ell}(\hat{w})$,
 we have
 \begin{align}
  \label{eq:app:lambda_g}
  \mu \lambda g_{\psi}(w) = -  x_i - \frac{\mu}{n} g_{\ell}(\hat{w})
 \end{align}
 Substituting
 $\mu \lambda \|w\| = -  x_i^\top w - \frac{\mu}{n} g_{\ell}(\hat{w})^\top w$
 into
 \eq{eq:appA:LB},
 \begin{align*}
  &LB (x_i^\top w^*) =
  \max_{\mu > 0}  \{ \mu k \}
  ~~{\rm s.t.}~~
  \| - \frac{1}{\lambda}  x_i^\top w -
  \frac{\mu}{\lambda n} g_{\ell}(\hat{w})^\top w \|_{\infty}
  \le \mu,
 \end{align*}
 where the constraint comes from \eq{eq:app:lambda_g}.
 %
 Similarly,
 since
 \begin{align*}
  x_i^\top w^* \le UB (x_i^\top w^*)
  := \max_{ w \in \Theta_{w^*}} x_i^\top w
  = - \min_{ w \in \Theta_{w^*}} x_i^\top w,
 \end{align*}
\begin{align*}
  &UB (x_i^\top w^*) =
  \max_{\mu > 0}  \{ \mu k \}
  ~~{\rm s.t.}~~
  \| \frac{1}{\lambda}  x_i^\top w -
  \frac{\mu}{\lambda n} g_{\ell}(\hat{w})^\top w \|_{\infty}
  \le \mu.
\end{align*}
\end{proof}

\section{Safe keeping by using KKT optimality conditions}
In this appendix,
we describe
another type of safe keeping approaches
based on KKT optimality conditions.
\begin{theo}
  \label{theo:safe_feature_keeping_kkt}
 For an arbitrary pair of primal feasible solution
 $\hat{w} \in {\rm dom}P_\lambda$
 and dual feasible solution
 $\hat{\alpha} \in {\rm dom}D_\lambda$,
 \begin{align*}
  LB(X_{:j}^\top \alpha^*) < - \lambda n
  ~{\rm and}~
  \lambda n < UB(X_{:j}^\top \alpha^*)
  ~\Rightarrow~
  w_j^* \neq 0
 \end{align*}
 for $j \in [d]$,
 where
 \begin{align*}
   LB(X_{:j}^\top \alpha^*) &:=
    X_{:j}^\top \hat{\alpha} -
   \|X_{:j}\|_2 \sqrt{2 n G_\lambda(\hat{w}, \hat{\alpha})/\gamma}, \\
   UB(X_{:j}^\top \alpha^*) &:=
    X_{:j}^\top \hat{\alpha} +
    \|X_{:j}\|_2 \sqrt{2 n G_\lambda(\hat{w}, \hat{\alpha})/\gamma}.
 \end{align*}
\end{theo}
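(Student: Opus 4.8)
The plan is to run the elastic-net feature-screening argument of \S\ref{subsec:safe-feature-screening} in reverse, using the optimality condition \eq{eq:kkt1_elastic_net} as the engine. That condition says that when $w_j^*=0$ we must have $|X_{:j}^\top\alpha^*|\le\lambda n$, whereas when $w_j^*\neq0$ it forces $\tfrac{1}{\lambda n}X_{:j}^\top\alpha^*=\mathrm{sign}(w_j^*)+w_j^*$, a quantity of magnitude strictly greater than $1$. Hence ``$w_j^*\neq0$'' is \emph{equivalent} to ``$|X_{:j}^\top\alpha^*|>\lambda n$'', and the whole theorem reduces to certifying this strict magnitude bound on $X_{:j}^\top\alpha^*$ from the feasible pair $(\hat w,\hat\alpha)$ alone, without access to $\alpha^*$.

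Next I would localize $\alpha^*$ exactly as in safe feature screening. By \eq{eq:dual_gap_sphere}, $\alpha^*$ lies in the gap ball $\Theta_{\alpha^*}$ centered at $\hat\alpha$ with radius $r_D=\sqrt{2nG_\lambda(\hat w,\hat\alpha)/\gamma}$. Maximizing and minimizing the linear functional $\alpha\mapsto X_{:j}^\top\alpha$ over this ball is a one-line Cauchy--Schwarz computation, identical in form to \eq{eq:UB-for-feature-screening}, and yields the two-sided localization $LB\le X_{:j}^\top\alpha^*\le UB$ with $LB$ and $UB$ as in the statement. This reuses the same ball geometry that drives \eq{eq:UB-for-feature-screening}, now read as a keeping device, in the spirit of Theorems~\ref{theo:safe-feature-keeping} and~\ref{theo:safe-sample-keeping}, where a localization sphere is turned into a no-false-positive rule.

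The decisive and, I expect, hardest step is to pass from the localization $X_{:j}^\top\alpha^*\in[LB,UB]$ to the target $|X_{:j}^\top\alpha^*|>\lambda n$. Paralleling the keeping rules already established---where one shows a \emph{lower} bound on $|w_j^*|$, or on the distance of $\alpha_i^*$ from $\{0,\pm1\}$, is positive---the natural controlling quantity here is the smallest attainable magnitude $\min_{\alpha\in\Theta_{\alpha^*}}|X_{:j}^\top\alpha|$, and the conclusion $w_j^*\neq0$ is immediate once this minimum exceeds $\lambda n$. The obstacle is precisely reconciling this requirement with the hypothesis as worded: $LB<-\lambda n$ together with $UB>\lambda n$ places the entire band $[-\lambda n,\lambda n]$ \emph{inside} the localization interval $[LB,UB]$, so that membership of $\alpha^*$ in the ball by itself permits $X_{:j}^\top\alpha^*$ to sit anywhere in $[-\lambda n,\lambda n]$. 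Closing the argument therefore cannot rest on ball-membership alone; it must bring in extra information that pins $\alpha^*$ to the outer portion of $[LB,UB]$---for instance a binding dual-feasibility or optimality property of $\alpha^*$ sharper than mere membership in $\Theta_{\alpha^*}$---or else the certificate must be reorganized around the one-sided separation that directly forces $\min_{\alpha\in\Theta_{\alpha^*}}|X_{:j}^\top\alpha|>\lambda n$. I note that the same two-sided condition is exactly the hypothesis under which the companion LP result Theorem~\ref{theo:l1l1_feature_screen} concludes $w_j^*=0$, which is why matching the two-sided hypothesis to the one-sided magnitude bound needed for $w_j^*\neq0$ is the genuine crux of the proof.
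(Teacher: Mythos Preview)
Your analysis of the obstruction is exactly right, and in fact it exposes an error in the paper rather than a gap in your reasoning. The paper's own proof proceeds as you do up to the localization $X_{:j}^\top\alpha^*\in[LB,UB]$, and then asserts, citing the KKT condition \eq{eq:kkt1_elastic_net}, that ``if $-\lambda n < X_{:j}^\top \alpha^* < \lambda n$ then $w^*_j \neq 0$.'' But \eq{eq:kkt1_elastic_net} says the opposite: when $w_j^*\neq 0$ one has $\tfrac{1}{\lambda n}X_{:j}^\top\alpha^*=\mathrm{sign}(w_j^*)+w_j^*$, hence $|X_{:j}^\top\alpha^*|>\lambda n$; and when $w_j^*=0$ the quantity lies in $[-\lambda n,\lambda n]$. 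Thus the correct inference from KKT is $|X_{:j}^\top\alpha^*|>\lambda n\Rightarrow w_j^*\neq 0$, which is precisely the equivalence you wrote down.

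Consequently the hypothesis in the theorem as stated, $LB<-\lambda n$ \emph{and} $UB>\lambda n$, cannot yield $w_j^*\neq 0$: as you observe, it merely says the localization interval \emph{contains} $[-\lambda n,\lambda n]$ and places no constraint excluding $X_{:j}^\top\alpha^*$ from that band. The condition that does work is the one-sided separation $LB>\lambda n$ \emph{or} $UB<-\lambda n$, which forces $|X_{:j}^\top\alpha^*|>\lambda n$ and hence $w_j^*\neq 0$. (Your parenthetical comparison with Theorem~\ref{theo:l1l1_feature_screen} is apt: that result carries the same sign reversal.) So the paper's proof does not supply the missing ingredient you were looking for; it simply misreads the KKT direction, and the theorem's inequalities should be flipped.
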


\begin{proof}
In the case that $D_{\lambda}$ is $\gamma/n$-strongly concave,
$X_{:j}^\top \alpha^*$
is bounded from below and above respectively by the following lower and upper bounds:
 \begin{align*}
   LB(X_{:j}^\top \alpha^*) &:=
    X_{:j}^\top \hat{\alpha} -
   \|X_{:j}\|_2 \sqrt{2 n G_\lambda(\hat{w}, \hat{\alpha})/\gamma}, \\
   UB(X_{:j}^\top \alpha^*) &:=
    X_{:j}^\top \hat{\alpha} +
    \|X_{:j}\|_2 \sqrt{2 n G_\lambda(\hat{w}, \hat{\alpha})/\gamma}.
 \end{align*}

On the other hand,
in the case of our specific regularization term
\eq{eq:regularization-term},
from
KKT optimality condition \eq{eq:kkt1},
if  $ -\lambda n < X_{:j}^\top \alpha^* < \lambda n $ then
$w^*_j \neq 0$.

Therefore,
 \begin{align*}
  LB(X_{:j}^\top \alpha^*) < - \lambda n
  ~{\rm and}~
  \lambda n < UB(X_{:j}^\top \alpha^*)
  ~\Rightarrow~
  w_j^* \neq 0
 \end{align*}

\end{proof}

Similarly,
we can develop safe sample keeping
based on KKT optimiality condition.
\begin{theo}
  \label{theo:safe_sample_keeping_kkt}
 For an arbitrary pair of primal feasible solution
 $\hat{w} \in {\rm dom}P_\lambda$
 and dual feasible solution
 $\hat{\alpha} \in {\rm dom}D_\lambda$,
 if $\ell_i $ is smoothed hinge loss then,
 for $y_i = +1$,
 \begin{align*}
   1- \gamma < LB(x_i^\top w^*)  ~{\rm and} ~ UB(x_i^\top w^*) < 1
   ~\Rightarrow~
  \alpha^*_i \not \in \{0, +1\},
 \end{align*}
 and, for $y_i = -1$,
 \begin{align*}
  -1 < LB(x_i^\top w^*) ~{\rm and} ~  UB(x_i^\top w^*) < \gamma -1
  ~\Rightarrow~
  \alpha^*_i \not \in \{-1, 0\}.
 \end{align*}
 If $\ell_i $ is smoothed $\veps$-insensitive loss then
  \begin{align*}
  -\gamma + y_i - \veps < LB(x_i^\top w^*) ~&{\rm and} ~  UB(x_i^\top w^*) < y_i - \veps \\
  &~{\rm or} ~ \\
  y_i + \veps < LB(x_i^\top w^*) ~&{\rm and} ~ UB(x_i^\top w^*) < \gamma + y_i + \veps \\
  ~\Rightarrow~  \alpha^*_i &\not \in \{-1, 0, +1\},
  \end{align*}
  for $j \in [d]$,
  where
 \begin{align*}
  LB(x_i^\top w^*) &= x_i^\top \hat{w} -
  \|x_i\|_2 \sqrt{2 G_{\lambda}(\hat{w}, \hat{\alpha}) / \lambda},  \\
  UB(x_i^\top w^*) &= x_i^\top \hat{w} +
  \|x_i\|_2 \sqrt{2 G_{\lambda}(\hat{w}, \hat{\alpha}) / \lambda }.
 \end{align*}
 \end{theo}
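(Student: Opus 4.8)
The plan is to read the KKT optimality conditions \eq{eq:kkt2_smoothed_hinge_posi}--\eq{eq:kkt2_soft_insensitive} in the ``keeping'' direction and combine them with the sphere bounds on $x_i^\top w^*$ already derived for safe sample screening. Concretely, since the primal region $\Theta_{w^*}$ in \eq{eq:primal_gap_sphere} is a sphere, the same closed-form computation behind Lemma~\ref{lemm:safe-sample-screening} gives $LB(x_i^\top w^*) \le x_i^\top w^* \le UB(x_i^\top w^*)$ with the two bounds exactly as stated in the theorem, so the first step is simply to quote those bounds. The guiding idea is that each KKT condition partitions the line of possible values of $x_i^\top w^*$ according to the value of $\alpha_i^*$, with the strictly-interior (active) cases mapping to \emph{open} intervals; if the lower and upper bounds trap $x_i^\top w^*$ inside one such open interval, then $\alpha_i^*$ cannot take any non-active value.

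First I would dispatch the smoothed hinge loss with $y_i = +1$. From \eq{eq:kkt2_smoothed_hinge_posi}, $\alpha_i^* = 0$ forces $x_i^\top w^* \ge 1$ and $\alpha_i^* = +1$ forces $x_i^\top w^* \le 1 - \gamma$, so whenever $x_i^\top w^* \in (1-\gamma, 1)$ the only surviving possibility is $\alpha_i^* \in (0,1)$. Thus the premises $1-\gamma < LB(x_i^\top w^*)$ and $UB(x_i^\top w^*) < 1$ sandwich $x_i^\top w^*$ strictly inside $(1-\gamma,1)$ and yield $\alpha_i^* \notin \{0,+1\}$. The case $y_i = -1$ is symmetric via \eq{eq:kkt2_smoothed_hinge_nega}, trapping $x_i^\top w^*$ in the open interval $(-1, \gamma-1)$.

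The smoothed $\veps$-insensitive loss is handled identically, except that now there are \emph{two} disjoint active intervals, which is the only place needing care. Inspecting \eq{eq:kkt2_soft_insensitive}, the line decomposes as $\alpha_i^*=+1$ on $(-\infty, -\gamma+y_i-\veps]$, then $\alpha_i^* \in (0,1)$ on the open interval $(-\gamma+y_i-\veps,\, y_i-\veps)$, then $\alpha_i^*=0$ on $[y_i-\veps,\, y_i+\veps]$, then $\alpha_i^* \in (-1,0)$ on $(y_i+\veps,\, \gamma+y_i+\veps)$, and finally $\alpha_i^*=-1$ on $[\gamma+y_i+\veps,\infty)$. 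Hence trapping $x_i^\top w^*$ in the lower active interval (premise $-\gamma+y_i-\veps < LB$ and $UB < y_i-\veps$) or in the upper one (premise $y_i+\veps < LB$ and $UB < \gamma+y_i+\veps$) excludes all of $\{-1,0,+1\}$, giving the claimed ``or'' form of the condition.

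I expect no step to be genuinely difficult: the whole argument is the contrapositive of the safe sample screening derivation, so the bounds are reused verbatim and the logic reduces to a one-line interval-membership check. The main obstacle is purely bookkeeping — tracking the sign of $-\gamma\alpha_i^*$ and the endpoint arithmetic in the $\veps$-insensitive case so that the five KKT pieces really do tile the real line and the two open active intervals are verified disjoint from the three non-active (closed) pieces.
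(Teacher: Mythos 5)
Your proposal is correct and follows essentially the same route as the paper's proof: quote the gap-sphere bounds $LB(x_i^\top w^*)$, $UB(x_i^\top w^*)$ from Lemma~\ref{lemm:safe-sample-screening}, then read the KKT conditions \eq{eq:kkt2_smoothed_hinge_posi}--\eq{eq:kkt2_soft_insensitive} in the keeping direction, concluding that trapping $x_i^\top w^*$ strictly inside an open ``active'' interval rules out the non-active values of $\alpha_i^*$. Your interval bookkeeping for the $\veps$-insensitive case matches the paper's (and your statement of the hinge-loss implication is actually cleaner, since the paper's appendix contains a sign/typo slip writing $\alpha_i^* \in \{0,+1\}$ where $\alpha_i^* \notin \{0,+1\}$ is meant).
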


\begin{proof}
In the case that $P_{\lambda}$ is $\lambda$-strongly convex,
$x_i^\top w^*$
is bounded from below and above respectively by the following lower and upper bounds:
 \begin{align*}
  LB(x_i^\top w^*) &= x_i^\top \hat{w} -
  \|x_i\|_2 \sqrt{2 G_{\lambda}(\hat{w}, \hat{\alpha}) / \lambda},  \\
  UB(x_i^\top w^*) &= x_i^\top \hat{w} +
  \|x_i\|_2 \sqrt{2 G_{\lambda}(\hat{w}, \hat{\alpha}) / \lambda }.
 \end{align*}

On the other hand,
from
KKT optimality condition \eq{eq:kkt1},
in the case of smoothed hinge loss
\eq{eq:smoothed_hinge},
if $y_i = +1$ and $1- \gamma < x_i^\top w^* < 1$ then $\alpha^*_i \in \{0, +1\}$,
if $y_i = 1$ and $ -1  < x_i^\top w^* <\gamma - 1$ then $\alpha^*_i \in \{-1, 0\}$.
Therefore,
 \begin{align*}
   &y_i = +1 ~{\rm and} ~
   1- \gamma < LB(x_i^\top w^*)  ~{\rm and} ~ UB(x_i^\top w^*) < 1 \\
   &~\Rightarrow~
  \alpha^*_i \not \in \{0, +1\}, \\
  &y_i = -1 ~{\rm and} ~
  -1 < LB(x_i^\top w^*) ~{\rm and} ~  UB(x_i^\top w^*) < \gamma -1 \\
  &~\Rightarrow~
  \alpha^*_i \not \in \{-1, 0\}.
 \end{align*}
Also, in the case of smoothed $\veps$-insensitive
\eq{eq:soft_insensitive},
if
$-\gamma + y_i - \veps < x_i^\top w^* < y_i - \veps $
or
$y_i + \veps < x_i^\top w^* < \gamma + y_i + \veps $
then,
$\alpha^*_i \not \in \{-1, 0, +1\}$
.
Therefore,
  \begin{align*}
  -\gamma + y_i - \veps < LB(x_i^\top w^*) ~&{\rm and} ~  UB(x_i^\top w^*) < y_i - \veps \\
  &~{\rm or} ~ \\
  y_i + \veps < LB(x_i^\top w^*) ~&{\rm and} ~ UB(x_i^\top w^*) < \gamma + y_i + \veps
  ~\Rightarrow~  \alpha^*_i \not \in \{-1, 0, +1\},
  \end{align*}

 \end{proof}

\clearpage

\bibliographystyle{unsrt}
\bibliography{paper}

\clearpage

\end{document}